\documentclass{article}

\usepackage{fullpage}
\usepackage[utf8]{inputenc}
\usepackage[T1]{fontenc}
\usepackage{textcomp}
\usepackage{bm}
\usepackage{dsfont}

\usepackage{amsmath}
\usepackage{amssymb}
\usepackage{amsthm}
\usepackage{amsfonts}       

\usepackage{mathtools}
\DeclarePairedDelimiter{\prn}{(}{)}
\DeclarePairedDelimiter{\set}{\{}{\}}
\DeclarePairedDelimiterX{\Set}[2]{\{}{\}}{\,{#1}\,:\,{#2}\,}
\DeclarePairedDelimiter{\abs}{|}{|}
\DeclarePairedDelimiter{\norm}{\|}{\|}
\DeclarePairedDelimiter{\inpr}{\langle}{\rangle}

\DeclarePairedDelimiterX{\Brc}[2]{[}{]}{\,{#1}\,\middle|\,{#2}\,}

\DeclareFontFamily{U}{mathx}{}
\DeclareFontShape{U}{mathx}{m}{n}{<-> mathx10}{}
\DeclareSymbolFont{mathx}{U}{mathx}{m}{n}
\DeclareMathAccent{\widecheck}{0}{mathx}{"71}

\usepackage{graphicx}
\usepackage[svgnames]{xcolor}

\usepackage{booktabs}
\usepackage{multirow}

\makeatletter
\let\blx@noerroretextools\@empty
\makeatother
\usepackage[date=year,eprint=false,doi=false,isbn=false,backend=biber,giveninits=true,maxcitenames=2,maxbibnames=99,natbib=true,url=false,sorting=ynt,style=apa,citestyle=authoryear-comp,backref=true,uniquelist=false]{biblatex}

\DeclareSourcemap{
  \maps[datatype=bibtex]{
    \map{
      \step[fieldset=editor, null]
      \step[fieldset=series, null]
      \step[fieldset=language, null]
      \step[fieldset=address, null]
      \step[fieldset=location, null]
      \step[fieldset=month, null]
      \step[fieldset=annote, null]
    }
  }
}
\DefineBibliographyStrings{english}{%
  backrefpage  = {cited on page},
  backrefpages = {cited on pages},
}
\DefineBibliographyStrings{american}{%
  backrefpage  = {cited on page},
  backrefpages = {cited on pages},
}
\DefineBibliographyStrings{american-apa}{%
  backrefpage  = {cited on page},
  backrefpages = {cited on pages},
}
\renewbibmacro*{pageref}{%
  \iflistundef{pageref}
    {}
    {\printtext[parens]{%
       \midsentence%
       \ifnumgreater{\value{pageref}}{1}
         {\bibstring{backrefpages}\ppspace}
         {\bibstring{backrefpage}\ppspace}%
       \printlist[pageref][-\value{listtotal}]{pageref}}}}

\usepackage{algorithm}
\usepackage{ifthen}

\usepackage[colorlinks=true,citecolor=Navy,linkcolor=Maroon,urlcolor=Orchid,bookmarksnumbered,hypertexnames=false,pdfdisplaydoctitle,pdfusetitle,unicode]{hyperref}

\usepackage[noend]{algpseudocode}

\algnewcommand{\algorithmicinput}{\textbf{Input:}}
\algnewcommand{\Input}{\item[\algorithmicinput]}
\algnewcommand{\algorithmicoutput}{\textbf{Input:}}
\algnewcommand{\Output}{\item[\algorithmicoutput]}
\algnewcommand{\Break}{\textbf{break}}
\makeatletter
\renewcommand{\ALG@step}{%
  \refstepcounter{ALG@line}%
  \addtocounter{ALG@rem}{1}%
  \ifthenelse{\equal{\arabic{ALG@rem}}{\ALG@numberfreq}}%
    {\setcounter{ALG@rem}{0}\alglinenumber{\arabic{ALG@line}}}%
    {}%
}
\makeatother

\usepackage{url}
\usepackage{upref}
\usepackage[capitalize,noabbrev]{cleveref}

\crefname{step}{Step}{Steps}
\Crefname{step}{Step}{Steps}
\crefname{appendix}{Appendix}{Appendices}
\Crefname{appendix}{Appendix}{Appendices}

\usepackage{autonum}

\usepackage{nicefrac}       
\usepackage{microtype}      
\usepackage{xspace}
\usepackage{subcaption}
\usepackage{wrapfig}
\usepackage{siunitx}
\usepackage{thm-restate}
\allowdisplaybreaks
\usepackage{enumitem}

\usepackage[disable,color={red!100!green!33},colorinlistoftodos,prependcaption,textsize=small]{todonotes}

\newcommand{\rr}[1]{}

\usepackage{pgfplots}
\usepackage{tikz, tikz-3dplot}

\usetikzlibrary{positioning}
\usetikzlibrary{arrows}

\newtheorem{theorem}{Theorem}[section]
\newtheorem{lemma}[theorem]{Lemma}
\newtheorem{proposition}[theorem]{Proposition}

\newtheorem{assumption}[theorem]{Assumption}
\theoremstyle{definition}
\newtheorem{definition}[theorem]{Definition}

\newtheorem{remark}[theorem]{Remark}

\newcommand{\Z}{\mathbb{Z}}
\newcommand{\R}{\mathbb{R}}

\DeclareMathOperator{\argmax}{arg\,max}

\newcommand{\E}{\mathop{\mathbb{E}}}

\newcommand{\e}{\mathbf{e}}

\newcommand{\Bd}{\mathbb{B}^d}

\newcommand{\titlefootnotemarks}[2]{\textsuperscript{\normalfont #1,#2}}
\newcommand{\titlefootnotetext}[3]{%
  \begingroup
  \renewcommand{\thefootnote}{#2}%
  \footnotetext[#1]{#3}%
  \endgroup
}

\title{Finite and Corruption-Robust Regret Bounds in Online Inverse Linear Optimization under M-Convex Action Sets}

\author{%
Taihei Oki\titlefootnotemarks{\textdagger}{*}%
\and
Shinsaku Sakaue\titlefootnotemarks{\textdaggerdbl}{*}%
}

\date{}

\begin{document}

\vspace{-10pt}
\maketitle
\titlefootnotetext{1}{\textdagger}{Institute for Chemical Reaction Design and Discovery (ICReDD), Hokkaido University, Sapporo, Hokkaido, Japan. D3 Center, The University of Osaka, Osaka, Japan. Center for Advanced Intelligence Project, RIKEN, Tokyo, Japan. Email: oki@icredd.hokudai.ac.jp.}
\titlefootnotetext{2}{\textdaggerdbl}{CyberAgent, Tokyo, Japan. National Institute of Informatics, Tokyo, Japan. Center for Advanced Intelligence Project, RIKEN, Tokyo, Japan. Email: shinsaku.sakaue@gmail.com.}
\titlefootnotetext{3}{*}{Equal contribution.}
\vspace{-2.6em}

\begin{abstract}
  We study online inverse linear optimization, also known as contextual recommendation, where a \emph{learner} sequentially infers an \emph{agent}'s hidden objective vector from observed optimal actions over feasible sets that change over time.
  The learner aims to recommend actions that perform well under the agent's true objective, and the performance is measured by the \emph{regret}, defined as the cumulative gap between the agent's optimal values and those achieved by the learner's recommended actions.
  Prior work has established a regret bound of $O(d\log T)$, as well as a finite but exponentially large bound of $\exp(O(d\log d))$, where $d$ is the dimension of the optimization problem and~$T$ is the time horizon, while a regret lower bound of $\Omega(d)$ is known (Gollapudi et al.~2021; Sakaue et al.~2025).
  Whether a finite regret bound polynomial in $d$ is achievable or not has remained an open question.
  We partially resolve this by showing that when the feasible sets are \emph{M-convex}---a broad class that includes matroids---a finite regret bound of $O(d\log d)$ is possible.
  We achieve this by combining a structural characterization of optimal solutions on M-convex sets with a geometric volume argument.
  Moreover, we extend our approach to adversarially corrupted feedback in up to $C$ rounds.
  We obtain a regret bound of $O((C+1)d\log d)$ without prior knowledge of~$C$, by monitoring directed graphs induced by the observed feedback to detect corruptions adaptively.
\end{abstract}

\section{Introduction}\label{sec:introduction}
Inverse optimization \citep{Ahuja2001-cv,Chan2023-qk} is the problem of estimating parameters of optimization problems from observed optimal solutions. 
Since its early development in seismic tomography \citep{Tarantola1988-tq,Burton1992-dc}, inverse optimization has been applied to a wide range of tasks, such as inferring customer preferences from purchase behavior \citep{Barmann2017-wl,Barmann2020-hh} and identifying market structures in the electricity market \citep{Birge2017-il}. 
In this paper, we focus on inverse optimization with linear objectives, a fundamental setting that is both theoretically rich and practically relevant.

In many realistic scenarios, optimal solutions are observed sequentially, which naturally leads to the study of online inverse linear optimization.
\Citet{Barmann2017-wl,Barmann2020-hh} initiated the study of such an online framework, where an \emph{agent} sequentially takes actions that are optimal solutions to linear optimization problems of the form\looseness=-1 
\begin{equation}
  \text{maximize}\; \inpr{w^*, x} \qquad \text{subject to}\; x \in X_t 
  \label{eq:agent-problem}
\end{equation}
for $t=1,\dots, T$. 
Here, $T$ is the time horizon, $d$ is the dimension of the problem, $w^* \in \R^d$ is the agent's hidden objective vector, and $X_t \subseteq \R^d$ is a feasible set at round~$t$.
A \emph{learner} makes an estimate $\hat w_t \in \R^d$ of the agent's objective vector at each round $t$ based on observations of the agent's past actions and the feasible sets. 
A natural performance metric is the \emph{regret}, which measures the cumulative gap between the optimal objective values and those achieved by the actions chosen based on the learner's estimates.

\begin{table*}[tb]
  \caption{
    Regret bounds for online inverse linear optimization (contextual recommendation). 
    Here, we treat the sizes of the agent's feasible sets and the learner's decision set as constants; 
    see \citet[Appendix~A]{Sakaue2025-vb} for a discussion on how these sizes affect the regret bounds.
    The column ``Corruption'' indicates whether corrupted feedback is considered, while the way of handling corruption differs among the works; see \cref{sec:corrupted-feedback} for a discussion.
  }\label[table]{table:regret-bounds}
  \centering
  {\fontsize{8pt}{9pt}\selectfont
  \setlength{\tabcolsep}{5pt}
  \begin{tabular}{l l l l l}
    \toprule
    & Reference & Regret Bound & Corruption & Note \\
    \midrule
    \multirow{6}{*}{Upper} & \citet{Barmann2017-wl,Barmann2020-hh} & $O(\sqrt{T})$ & $\surd$ & \\ 
    & \citet{Besbes2021-ak,Besbes2023-zm} & $O(d^4\log T)$ & & \\
    & \citet{Gollapudi2021-ad} & $O(d\log T)$, $\exp(O(d\log d))$ & & \\
    & \citet{Sakaue2025-vb}, \citet{Sakaue2026-ea} & $O(d\log T)$ & $\surd$ & \\
    & \citet{Sakaue2025-fe} & $O(1/\Delta^2)$ &  & Under the $\Delta$-gap condition \\
    & This work & $O(d \log d)$ & $\surd$ & Under M-convex action sets \\
    \midrule
    Lower & \citet{Sakaue2025-vb} & $\Omega(d)$ & & Thm.~\ref{thm:lower} for the M-convex case\\
    \bottomrule
  \end{tabular}
  }
\end{table*}

\Citet{Barmann2017-wl,Barmann2020-hh} obtained a regret upper bound of $O\prn{\sqrt{T}}$. 
Subsequently, \citet{Besbes2021-ak,Besbes2023-zm}, \citet{Gollapudi2021-ad}, \citet{Sakaue2025-vb}, and \citet{Sakaue2026-ea} obtained regret upper bounds that depend only logarithmically on~$T$.  
\Citet{Sakaue2025-vb} also give a lower bound of~$\Omega(d)$, which is intuitive since the learner must estimate $d$ parameters of the agent's objective vector.
\Cref{table:regret-bounds} summarizes these results.

However, those regret upper bounds still depend at least logarithmically on $T$, which can be non-negligible in applications where observations accumulate rapidly.
\citet{Gollapudi2021-ad} also obtained a regret bound of $\exp(O(d \log d))$. 
While this bound is finite (independent of~$T$), its exponential dependence on the dimension $d$ severely limits its practical usefulness. 
\Citet{Sakaue2025-vb} showed that if the true objective vector is distant from the decision boundary by $\Delta$ over all rounds, a regret upper bound of $O\prn{1/\Delta^2}$ can be achieved; however, in general, decision boundaries at some rounds may become very close to the true objective vector, particularly when $T$ is huge. 
Therefore, achieving a finite and practically meaningful regret bound---especially one that scales polynomially in $d$---remains an important open problem.

\subsection{Our Contribution}\label{sec:contribution}
We show that when the feasible sets are \emph{M-convex}, a finite regret bound of $O(d \log d)$ is achievable.
This result partially resolves the question left open by \citet{Besbes2021-ak,Besbes2023-zm}, \citet{Gollapudi2021-ad}, \citet{Sakaue2025-vb}, and \citet{Sakaue2026-ea} regarding the existence of a finite regret upper bound polynomial in~$d$, under M-convex action sets. 
M-convex sets subsume \emph{matroid} action sets, a broad class of combinatorial structures that arise in many settings and are important both theoretically and practically; see also \cref{sec:m-convex-sets} for details.
At a technical level, our algorithm combines a characterization of optimal solutions on M-convex sets (\cref{prop:m-convex-argmax}) with a geometric volume argument (\cref{lem:volume-decrease}). 
This strategy for deriving the finite regret bound highlights the usefulness of M-convexity in online inverse linear optimization.

Furthermore, we address scenarios where the agent's actions may be adversarially corrupted in up to $C$ rounds, thereby relaxing the restrictive assumption that the agent's action is always optimal. 
For this setting, we extend our algorithm by combining it with a restart mechanism and obtain a regret upper bound of $O((C + 1)d \log d)$,\footnote{
  We write $O((C + 1)d \log d)$ instead of $O(C d \log d)$ to clarify that the bound holds even when there is no corruption ($C=0$).
}
 which recovers the original bound when there is no corruption and offers a robust guarantee in the presence of corruptions. 
Importantly, our algorithm does not require prior knowledge of $C$. 
We achieve this by monitoring the acyclicity of directed graphs constructed from the agent's feedback (\cref{lem:acyclic}).

Finally, we show that the proof of \citet{Sakaue2025-vb} for the regret lower bound of $\Omega(d)$ can be adapted to our M-convex case, establishing the same lower bound even when the agent's feasible sets are M-convex. 
Therefore, our regret upper bound of $O(d \log d)$ is tight up to a $\log d$ factor.

Below we summarize our contributions for online inverse linear optimization under M-convex action sets.
\begin{itemize}[leftmargin=.5cm]
  \item We present an algorithm with a regret upper bound of $O(d \log d)$, partially resolving the open question regarding the existence of a finite regret bound polynomial in~$d$.\looseness=-1
  \item We extend the algorithm to handle adversarial corruptions in up to $C$ rounds, achieving a regret upper bound of $O((C + 1)d \log d)$ without prior knowledge of $C$.
  \item We give a regret lower bound of $\Omega(d)$ for the M-convex case, showing that our $O(d \log d)$ upper bound is tight up to a $\log d$ factor.
\end{itemize}

\paragraph{Motivation for M-Convex Action Sets.}
Before moving to the main part, we discuss why M-convex action sets are of interest. When the feasible set is curved, such as an ellipsoid, the inverse problem becomes trivial because observing a single optimal solution already pins down the direction of the objective vector---the normal cone collapses to a single ray. 
The genuinely challenging case is when the action sets are non-curved, such as polyhedral sets, which typically arise in combinatorial optimization. 
M-convex sets form a broad subclass of such action sets with useful combinatorial structure; see \cref{sec:m-convex-sets} for examples. 
Focusing on M-convex sets therefore lets us cover a wide range of practically relevant problems while retaining enough structure to establish improved regret bounds.

\paragraph{Organization.}
This paper is organized as follows.
The rest of this section discusses related work.
\Cref{sec:preliminaries} presents the problem setting and preliminaries.
As a warm-up, \cref{sec:warm-up} gives a basic $O(d^2)$-regret algorithm, and \cref{sec:uncorrupted-main} refines it to achieve the $O(d \log d)$ regret bound.
\Cref{sec:corrupted} extends the algorithm to handle corruptions.
\Cref{sec:lower} gives the $\Omega(d)$ regret lower bound.
\Cref{sec:conclusion} concludes the paper.

\subsection{Related Work}\label{sec:related-work}
As noted in \cref{sec:introduction}, inverse optimization has long been used to infer hidden parameters from observed optimal solutions. 
Recent work has been particularly active in data-driven offline settings that estimate parameters from multiple observed solutions \citep{Bertsimas2015-kw,Aswani2018-jf,Mohajerin-Esfahani2018-jf}. 
For the online setting, \citet{Barmann2017-wl} initiated a line of work on sequentially learning from optimal actions. 
This line has led to regret bounds summarized in \cref{table:regret-bounds}; see also \citet{Sakaue2025-vb} for an overview.\looseness=-1

Another related but distinct line of research is contextual search/pricing \citep{Liu2021-il,Paes-Leme2022-xk}, where the learner aims to infer $\inpr{w^*, x_t}$ from context vectors $x_t$ under limited feedback. \citet{Gollapudi2021-ad} leverage techniques from this literature for contextual recommendation, which is equivalent to online inverse linear optimization up to terminology. 
Our analysis also borrows techniques from this literature, particularly the volume argument in \cref{sec:uncorrupted-main}, while our novelty lies in synthesizing this argument with the M-convex structure to achieve the finite regret bound of $O(d \log d)$.
Corruption-robust methods have also attracted attention in this field 
(\citealt{Krishnamurthy2021-un}; 
\citealt{Paes-Leme2022-ck,Paes-Leme2025-jc}; 
\citealt{Gupta2025-oo}). 
In particular, \citet{Gupta2025-oo} have proposed, for contextual pricing, a framework that runs $C+1$ parallel copies of an uncorrupted algorithm to obtain a regret bound of $O(C)$ times the uncorrupted regret. 
Our method yields a similar multiplicative $O(C)$ degradation but follows a different route: it leverages the M-convex structure to detect corruptions and uses restarts instead of maintaining multiple copies.\looseness=-1

M-convexity is a fundamental concept in \emph{discrete convex analysis} \citep{Murota2003-bq}, which arises in various fields, such as resource allocation \citep{Katoh2013-xt} and economics \citep{Murota2022-bk}. 
Recently, M-convex function minimization has gained attention in the area of algorithms with predictions \citep{Oki2023-xx} and online learning \citep{Oki2024-dw}, but the role of M-convexity in online inverse linear optimization has been unexplored.

\section{Preliminaries}\label{sec:preliminaries}
\paragraph{Notation.}
Let $[d] = \set{1,\dots,d}$ for any positive integer~$d$.
For a vector $x \in \R^d$ and $i \in [d]$, let $x(i)$ denote the $i$th component.
For $i \in [d]$, we use $\e_i \in \R^d$ to denote the $i$th standard basis vector, i.e., $\e_i(j) = 1$ if $i = j$ and $0$ otherwise.\looseness=-1

\subsection{Problem Setting}\label{sec:problem-setting}
For $t = 1, \ldots, T$, the agent, who has a hidden objective vector $w^* \in \R^d$, solves \eqref{eq:agent-problem} and takes the obtained action $x_t \in X_t$. 
In each round $t$, the learner computes an estimate $\hat w_t \in \R^d$ based on the information $\set*{(X_s, x_s)}_{s=1}^{t-1}$ observed up to round $t-1$. 
Then, given the feasible set $X_t$, we define the action suggested by the learner's estimate $\hat w_t$ as 
\begin{equation}
  \hat x_t \in \argmax\Set*{\inpr*{\hat w_t, x}}{x \in X_t}.
\end{equation} 
Note that, at this point, $X_t$ is used solely to define $\hat x_t$; the learner's estimate $\hat w_t$ is computed from information up to round $t-1$. 
Then, the learner observes the agent's actual action $x_t$ and the feasible set $X_t$, which will be used to compute the next estimate $\hat w_{t+1}$. 
In what follows, we suppose that the learner knows $X_t$ at the beginning of round $t$ for simplicity, although our analysis applies to the case where $X_t$ is revealed only after the learner chooses $\hat w_t$.

The learner's regret is defined as
\begin{equation}\label{eq:regret}
  R_T = \sum_{t=1}^T \inpr*{w^*, x_t - \hat x_t},
\end{equation}
which is the cumulative gap between the optimal values and objective values achieved by following the learner's choices. 
In this study, we focus on the case where the agent's feasible sets $X_t \subseteq \Z^d$ (embedded in $\R^d$) are M-convex; see \cref{sec:m-convex-sets} for details.

\begin{remark}[On the regret metric]
  Although our regret notion differs from the standard inverse-optimization goal of fully estimating $w^*$, it is the natural online analogue of the \emph{estimation loss} widely used in inverse optimization \citep{Chen2020-kc,Sun2023-nc,Sakaue2025-fe}. Moreover, because we only observe optimal actions (and feasible sets), components of $w^*$ that do not affect the choice of optimal actions cannot be inferred accurately. In this light, the regret based on the gap between observed and predicted actions is an appropriate performance measure.
\end{remark}

For simplicity, we make the following assumption.
\begin{assumption}\label{assumption:boundedness}
  We assume the following conditions.
  \begin{enumerate}[leftmargin=.5cm,itemsep=1pt,topsep=0pt]
    \item $\max\Set*{\inpr{w^*, x - x'}}{x, x' \in X_t} = O(1)$ for all $t$.
    \item The components of $w^*$ are distinct.
  \end{enumerate}
\end{assumption}
The first condition means that the per-round regret is $O(1)$, which is a standard simplification. 
Note that as the agent's objective function is linear, we are only interested in estimating the direction of $w^*$, not in its magnitude; therefore, we can assume any bounds on the magnitude of $w^*$ for convenience. 
The second condition is mainly to avoid complications arising from ties among components of $w^*$. 
As detailed in \cref{asec:tied-components}, this condition can indeed be removed in the uncorrupted case  (\cref{sec:warm-up,sec:uncorrupted-main}), while it is needed in the corrupted case (\cref{sec:corrupted}).

\subsection{Corrupted Feedback}\label{sec:corrupted-feedback}
Our corruption model follows the standard one in the literature on contextual search/pricing 
(\citealt{Krishnamurthy2021-un}; 
\citealt{Paes-Leme2022-ck,Paes-Leme2025-jc}; 
\citealt{Gupta2025-oo}). 
Specifically, we define the corruption level $C$ as the number of rounds $t$ in which the agent's action $x_t \in X_t$ is not an optimal solution to \eqref{eq:agent-problem}, i.e.,
\[
  C = \abs*{\Set*{t \in [T]}{x_t \notin \argmax_{x \in X_t} \inpr{w^*, x}}}.
\]
We do not assume that the learner knows $C$ in advance.

The regret in this setting is still defined by \eqref{eq:regret}, using the actual actions $x_t$ taken by the agent. 
One might consider defining the regret using (unobserved) optimal actions as
\[
  R^*_T = \sum_{t=1}^T \inpr{w^*, x^*_t - \hat x_t} 
  \quad\text{where}\quad
  x^*_t \in \mathop{\argmax}_{x \in X_t} \inpr{w^*, x}.
\]
Since $\inpr{w^*, x^*_t - x_t}$ becomes positive at most $C$ times, we have $R^*_T \lesssim R_T + C$. 
Also, $R_T$ is $\Omega(C)$ in the worst case. 
Therefore, the two regret definitions admit essentially the same bounds up to constant factors.

\citet{Sakaue2025-vb,Sakaue2026-ea} study a more flexible framework where corruption is measured by the cumulative suboptimality of the corrupted actions, but their corruption-robust regret bound still includes a $\log T$ factor as in their uncorrupted case. Extending our approach to their corruption framework while retaining a finite regret bound is an interesting direction for future work.

\subsection{M-Convex Set}\label{sec:m-convex-sets}
M-convex sets are a central notion in discrete convex analysis \citep{Murota2003-bq}, which is characterized by an exchange property over the integer lattice as follows.
\begin{definition}[{\citealt[Section~4.1]{Murota2003-bq}}]\label{def:m-convex-set}
  A non-empty set of integer points, $X \subseteq \Z^d$, is \emph{M-convex} if the following condition holds: 
  for any $x, y \in X$ and $i \in [d]$ with $x(i) > y(i)$, there exists $j \in [d]$ with $x(j) < y(j)$ such that
  \[
  \text{
    $x - \e_i + \e_j \in X$ \quad and \quad $y + \e_i - \e_j \in X$.
  }
  \]
\end{definition}
An important property of M-convex sets is the following characterization of optimal solutions to linear optimization over M-convex sets.
\begin{proposition}[{\citealt[Theorem~6.26]{Murota2003-bq}}]\label{prop:m-convex-argmax}
  Let $X \subseteq \Z^d$ be an M-convex set. 
  For any $w \in \R^d$ and $x \in X$, the following two conditions are equivalent:
  \begin{enumerate}
    \item $x \in \argmax\Set*{\inpr{w, x'}}{x' \in X}$.
    \item $w(i) \ge w(j)$ for all $i, j \in [d]$ with $x - \e_i + \e_j \in X$.
  \end{enumerate}
\end{proposition}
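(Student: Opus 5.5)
The direction (1)$\Rightarrow$(2) is immediate. If $x$ is a maximizer and $x - \e_i + \e_j \in X$, then optimality gives
\[
\inpr{w, x - \e_i + \e_j} \le \inpr{w, x},
\]
that is, $w(j) - w(i) \le 0$, which is exactly the required inequality $w(i) \ge w(j)$.

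For (2)$\Rightarrow$(1), I will argue by induction on the $\ell_1$-distance. Assume (2) holds and, for contradiction, that some $y \in X$ satisfies $\inpr{w, y} > \inpr{w, x}$. Among all such $y$, choose one minimizing $\norm{y - x}_1$. The plan is to use the exchange property of \cref{def:m-convex-set} to produce another point that is still strictly better than $x$ but strictly closer to it, which contradicts the choice of $y$. A single exchange does not obviously achieve this. Exchanging on $y$ at a coordinate $i$ with $y(i) > x(i)$ gives $j$ with $y(j) < x(j)$ such that $y' = y - \e_i + \e_j \in X$ and $x - \e_j + \e_i \in X$. By (2), the second membership gives $w(j) \ge w(i)$, so
\[
\inpr{w, y'} = \inpr{w, y} + w(j) - w(i) \ge \inpr{w, y} > \inpr{w, x}.
\]
Moreover $\norm{y' - x}_1 = \norm{y - x}_1 - 2$. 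Here I used the exchange in the form ``for $y, x$ and $i$ with $y(i) > x(i)$ there exists $j$ with $y(j) < x(j)$ such that $y - \e_i + \e_j \in X$ and $x + \e_i - \e_j \in X$,'' which is precisely the definition with the roles of $x$ and $y$ swapped. So $y'$ is closer to $x$ and still strictly better, contradicting minimality.

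It remains to check that such an $i$ exists. This needs $y \ne x$ to force some coordinate with $y(i) > x(i)$. Since $\inpr{w,y} > \inpr{w,x}$ we have $y \neq x$. All points of an M-convex set have the same coordinate sum, which follows from the exchange property applied repeatedly, or more simply from the fact that each exchange preserves the sum. Hence $y \ne x$ forces some coordinate with $y(i) > x(i)$. In fact the existence of $i$ can also be seen without the equal-sum fact: if $y \le x$ coordinatewise with $y \ne x$, apply the definition with the roles reversed at some $i$ with $x(i) > y(i)$; it produces $j$ with $x(j) < y(j)$, a contradiction, so such $i$ exists anyway.

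The only subtle step is orienting the exchange correctly, so that the point landing next to $x$ is exactly of the form $x - \e_j + \e_i$ that condition (2) controls. Everything else is bookkeeping. The base case of the induction is vacuous, since $\norm{y - x}_1 = 0$ means $y = x$, which cannot be strictly better than $x$.
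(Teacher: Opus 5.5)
Your proof is correct. The key step is oriented properly. Applying the exchange axiom of \cref{def:m-convex-set} to the triple $(y,x,i)$ makes the point adjacent to $x$ equal to $x-\e_j+\e_i$, so condition~2 gives $w(j)\ge w(i)$. Then $y-\e_i+\e_j$ is at least as good as $y$ and lies at $\ell_1$-distance two less from $x$, because $i\ne j$ and both coordinates move toward $x$.

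Your direct argument for the existence of $i$ with $y(i)>x(i)$ is also rigorous. By contrast, the remark that ``each exchange preserves the sum'' would not suffice on its own, since it presupposes that any two points of $X$ are linked by exchanges. You do not need it.

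The paper gives no proof of its own. It simply imports the statement from Murota's book, citing Theorem~6.26 there. That theorem appears to be the M-optimality criterion for general M-convex functions $f$: global minimality of $x$ is equivalent to $f(x)\le f(x-\e_i+\e_j)$ for all $i,j$. The proposition would be its specialization to $-\inpr{w,\cdot}$ restricted to $X$. The citation gives generality to nonlinear M-convex objectives. Your argument gives self-containment: it uses only the set exchange axiom and needs no finiteness of $X$, since the minimal counterexample exists because $\ell_1$-distances are nonnegative integers.
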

That is, an action $x$ is optimal for $w$ if and only if no local exchange of one unit between any two indices $i$ and $j$ that maintains the feasibility can increase the objective value.
Intuitively, this property enables us to gain more information about the objective vector $w^*$ from observed optimal solutions compared to the general case.\looseness=-1

This useful structure indeed appears in various interesting problems. Below we present such examples.

\paragraph{Example 1: Matroids.}
Let $\mathcal{B} \subseteq 2^{[d]}$ be a set family over~$[d]$.
We say $([d], \mathcal{B})$ is a \emph{matroid} if $\mathcal{B}$ is non-empty and satisfies the exchange property, i.e., for any $B_1, B_2 \in \mathcal{B}$ and $i \in B_1 \setminus B_2$, there exists $j \in B_2 \setminus B_1$ such that\looseness=-1 
\[
  \text{
    $(B_1 \setminus \{i\}) \cup \{j\} \in \mathcal{B}$ 
    \quad
    and 
    \quad
    $(B_2 \setminus \{j\}) \cup \{i\} \in \mathcal{B}$.
  }
\]
We call each element of $\mathcal{B}$ a \emph{basis}. 
By writing this in terms of characteristic vectors, i.e., identifying each basis $B \in \mathcal{B}$ with its characteristic vector $\chi_B \in \{0, 1\}^d$ defined as $\chi_B(i) = 1$ if $i \in B$ and $0$ otherwise, the exchange property of matroids coincides with that of M-convex sets on $\{0, 1\}^d$. 
Therefore, matroids form a subclass of M-convex sets. 
This observation allows us to handle various situations where problem~\eqref{eq:agent-problem} is combinatorial optimization over matroids, such as maximization over $m$-sets (subsets of size $m \le d$) and spanning trees in graphs, which correspond to uniform and graphic matroids, respectively.

\paragraph{Example 2: Extensions to integer lattice.} 
Let $D \in \Z_{>0}$ and $m \in [dD]$. 
Consider a situation where the agent is given $d$ kinds of items, each of which can be chosen up to~$D$ units, and the total number of chosen items must be $m$.\footnote{
  We can focus on the equality constraint when components of $w^*$ are non-negative without loss of generality; 
  otherwise, the feasible set defined by the inequality constraint is \emph{M$^{\natural}$-convex} in~$\Z^d$, which is indeed an M-convex set in~$\Z^{d+1}$ \citep[Section~4.3]{Murota2003-bq} and hence can be handled by our framework.
} 
Then, the agent's feasible set is given as
\[
  X = \Set[\Bigg]{x \in \set*{0,1,\dots,D}^d}{\sum_{i=1}^d x(i) = m}.
\]
This can be viewed as an extension of $m$-sets to the integer lattice, and satisfies the definition of M-convex sets. 
Note that naively considering $D$ copies of each item increases the dimension to $dD$ and worsens the regret bounds.\footnote{
  Strictly speaking, defining $X$ on the domain $\{0,1,\dots,D\}^d$ leads to a regret bound scaled up by a factor of $D$ compared to the alternative one derived by defining $X$ on $\{0,1\}^{dD}$. 
  Nonetheless, regret bounds that are superlinear in the dimension become larger when the dimension increases to~$dD$.
}
Similarly, we can naturally extend other matroids to M-convex sets; 
for example, we can extend the partition (or laminar) matroid by allowing multiple copies of each element.

\section{Warm-up: $O(d^2)$-Regret Algorithm}\label{sec:warm-up}
We begin by presenting a simple algorithm that achieves a regret upper bound of $O(d^2)$ based on \cref{prop:m-convex-argmax}. 
This and the next sections assume that the agent's actions are uncorrupted, i.e., $x_t \in \argmax_{x \in X_t} \inpr*{w^*, x}$ holds for all $t$. 

\begin{algorithm}[t]
  \caption{Algorithm for uncorrupted feedback}
  \label{alg:uncorrupted}
  \begin{algorithmic}[1]
    \State Let $A_1 = \emptyset$.
    \For{$t = 1, 2, \ldots, T$}
      \State Take $\hat w_t$ with distinct components such that 
      \Statex \begin{center}
        $\hat w_t(i) > \hat w_t(j)$ \quad for all $(i, j) \in A_t$.
      \end{center}
      \State Let $\hat x_t \in \argmax\Set*{\inpr{\hat w_t, x}}{x \in X_t}$.      
      \State Observe the agent's action $x_t$.
      \State $A_{t+1}\!\gets\!A_t \cup \Set*{(i, j)\!}{\!i \neq j,\, x_t - \e_i + \e_j \in X_t}$.
    \EndFor
  \end{algorithmic}
\end{algorithm}

\Cref{alg:uncorrupted} shows the details. 
The set $A_t$ maintains pairs of indices $(i, j)$ such that $w^*(i) > w^*(j)$ must hold for past observations $x_1, \ldots, x_{t-1}$ to be optimal for $w^*$; note that $A_t \subseteq A_{t+1}$ always holds.
Furthermore, the directed graph defined by $([d], A_t)$ is always acyclic.
Indeed, if it contained a directed cycle $i_1 \to i_2 \to \cdots \to i_k \to i_1$, then, we would have $w^*(i_1) > w^*(i_2) > \cdots > w^*(i_k) > w^*(i_1)$, as the components of $w^*$ are distinct, which is a contradiction.
Therefore, we can always take~$\hat w_t$ that satisfies the condition in Step~3, i.e., $\hat w_t(i) > \hat w_t(j)$ for all $(i, j) \in A_t$, by performing a topological sort of this acyclic graph. 
The notion of the directed graph $([d], A_t)$ and its acyclicity property will also play a key role in \cref{sec:corrupted} for handling corrupted feedback.\looseness=-1

The regret of \cref{alg:uncorrupted} is bounded as follows.
\begin{theorem}\label{thm:basic}
  \Cref{alg:uncorrupted} achieves 
  \[
    R_T = O(d^2).  
  \]
\end{theorem}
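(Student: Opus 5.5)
The plan is a potential argument. There are at most $d(d-1)$ ordered pairs $(i,j)$ with $i\neq j$, and the sets $A_t$ only grow. I will show that every round with positive regret adds at least one new pair to $A_t$, so $\abs{A_{t+1}} > \abs{A_t}$. Hence at most $d(d-1)$ rounds incur nonzero regret. By \cref{assumption:boundedness}, each such round costs $O(1)$, which gives $R_T = O(d^2)$.

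The key step is as follows. Fix a round $t$ with $\inpr{w^*, x_t - \hat x_t} > 0$. Then $\hat x_t$ is not optimal for $w^*$ on $X_t$. By \cref{prop:m-convex-argmax}, applied to $w^*$ and $\hat x_t$, there exist $i,j$ with $\hat x_t - \e_i + \e_j \in X_t$ and $w^*(i) < w^*(j)$; in particular $i \neq j$.

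On the other hand, $\hat x_t$ is optimal for $\hat w_t$, so \cref{prop:m-convex-argmax} applied to $\hat w_t$ gives $\hat w_t(i) \ge \hat w_t(j)$. Since $\hat w_t$ has distinct components, in fact $\hat w_t(i) > \hat w_t(j)$.

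I now argue that $(j,i) \notin A_t$. If it were in $A_t$, Step~3 would force $\hat w_t(j) > \hat w_t(i)$, contradicting the previous inequality.

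Next I show that $(j,i) \in A_{t+1}$, which I expect to be the one delicate point. Since $x_t$ is optimal for $w^*$, \cref{prop:m-convex-argmax} tells us only that all exchanges feasible at $x_t$ are consistent with $w^*$. It does not directly say that the exchange $x_t - \e_j + \e_i$ is feasible, and this is what is needed to add $(j,i)$. To get it, I would apply the exchange property of \cref{def:m-convex-set} to the pair $x_t$ and $\hat x_t$, perhaps iterating along an exchange path.

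A cleaner alternative avoids this: it suffices to show that some new pair enters $A_{t+1}$. Suppose instead that $A_{t+1} = A_t$. Then every pair $(a,b)$ with $x_t - \e_a + \e_b \in X_t$ lies in $A_t$, so $\hat w_t(a) > \hat w_t(b)$ for all such pairs. By \cref{prop:m-convex-argmax}, $x_t$ is then optimal for $\hat w_t$.

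Because $\hat w_t$ has distinct components, I would argue that the maximizer of $\inpr{\hat w_t,\cdot}$ over an M-convex set is unique. Suppose two maximizers $x \neq y$ existed. By the exchange property there are $a,b$ with $x(a) > y(a)$, $x(b) < y(b)$ and $x - \e_a + \e_b \in X$. Optimality of $x$ then gives $\hat w_t(a) \ge \hat w_t(b)$. Also $y + \e_a - \e_b \in X$, and optimality of $y$ gives $\hat w_t(b) \ge \hat w_t(a)$. So $\hat w_t(a) = \hat w_t(b)$ with $a \neq b$, a contradiction.

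Hence $\hat x_t = x_t$, and the regret in round $t$ is zero. Taking the contrapositive, every positive-regret round strictly enlarges $A_t$, which completes the counting argument.
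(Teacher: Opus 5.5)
Your final argument is exactly the paper's proof: if $A_{t+1}=A_t$, then $x_t$ is $\hat w_t$-optimal by \cref{prop:m-convex-argmax}, the $\hat w_t$-maximizer over $X_t$ is unique by the exchange axiom and the distinctness of $\hat w_t$, so $\hat x_t=x_t$ and every mistake round adds a new arc; counting $d(d-1)$ ordered pairs instead of the paper's $\binom{d}{2}$ (which relies on acyclicity) changes nothing for $O(d^2)$. Your abandoned first route does fail for an arbitrary $w^*$-improving exchange at $\hat x_t$ (take $X_t=\set{\e_1,\e_2,\e_3}$, $w^*(1)>w^*(2)>w^*(3)$, $x_t=\e_1$, $\hat x_t=\e_3$, $(i,j)=(3,2)$), but it closes in one step if you pick $i$ with $\hat x_t(i)>x_t(i)$ and take $j$ from \cref{def:m-convex-set} applied to $(\hat x_t,x_t)$, which gives $x_t-\e_j+\e_i\in X_t$ and $(j,i)\notin A_t$.
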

\begin{proof}
  We prove the claim by showing that $x_t \neq \hat x_t$ can happen $O(d^2)$ times. 
  In round $t$, if $A_{t+1} = A_t$, we have
  \[
    \Set*{(i, j) \in [d] \times [d]}{i \neq j,\, x_t - \e_i + \e_j \in X_t} \subseteq A_t. 
  \]  
  By the choice of $\hat w_t$ in Step~3, this means
  \[
    \hat w_t(i) > \hat w_t(j) \quad \text{$\forall (i, j)$\;s.t.\;$i \neq j$,\;$x_t - \e_i + \e_j \in X_t$}.
  \]
  In addition, $\hat w_t(i) = \hat w_t(j)$ obviously holds if $i = j$.
  Therefore, \cref{prop:m-convex-argmax} ensures 
  \[
    x_t \in \argmax\Set*{\inpr{\hat w_t, x}}{x \in X_t}.  
  \]
  Since components of $\hat w_t$ are distinct, the maximizer over $X_t$ is unique, i.e., 
  $\argmax\Set*{\inpr{\hat w_t, x}}{x \in X_t} = \set*{\hat x_t}$, which can be confirmed as follows: 
  if, for contradiction, there are distinct maximizers $\hat x, \hat y \in \argmax\Set*{\inpr{\hat w_t, x}}{x \in X_t}$, the definition of M-convex sets (\cref{def:m-convex-set}) implies that there exist $i, j \in [d]$ such that 
  \[
    \text{
      $\hat x - \e_i + \e_j \in X_t$ \quad and \quad $\hat y + \e_i - \e_j \in X_t$,
    }
  \]
one of which attains a larger objective value than $\hat x$ and $\hat y$ as $\hat w_t(i) \neq \hat w_t(j)$, contradicting the optimality of $\hat x$ and $\hat y$.

  Consequently, whenever $A_{t+1} = A_t$,  we have $x_t = \hat x_t$ and thus $\inpr{w^*, x_t - \hat x_t} = 0$. 
  From $A_t \subseteq A_{t+1}$ and $|A_{T+1}| \le \binom{d}{2}$, $A_{t+1} \neq A_t$ can happen $O(d^2)$ times.
  Therefore, we have $\sum_t \inpr{w^*, x_t - \hat x_t} = O(d^2)$, completing the proof.
\end{proof}
The core idea is: we incur a non-zero regret only when we mispredict the order of some index pair $(i, j)$ that matters for selecting an optimal action from $X_t$; once that happens, we can learn the correct order of $(i, j)$ and never mispredict it again. 
Therefore, the total number of such mistakes is at most the number of pairs of indices, which is $\binom{d}{2} = O(d^2)$. 
This argument is justified by the M-convexity of the action sets through \cref{prop:m-convex-argmax}.

\section{$O(d \log d)$-Regret Algorithm}\label{sec:uncorrupted-main}
We have observed that a simple algorithm (\cref{alg:uncorrupted}) can achieve a regret upper bound of $O(d^2)$ by maintaining pairwise orderings of indices. 
This section refines the algorithm to achieve a regret upper bound of $O(d \log d)$. 
The room for improvement lies in the arbitrariness of the choice of $\hat w_t$. 
Here, we take $\hat w_t$ as the center of gravity of 
\[
  P_t = \Set*{w \in [0, 1]^d}{w(i) \ge w(j) \text{ for all } (i, j) \in A_t},
\]
where tied components, if any, are broken by an arbitrarily small perturbation within $P_t$.\footnote{A sufficiently small perturbation does not affect the constant-factor volume shrinkage in the subsequent analysis; see approximate Grünbaum's theorem \citep{Bertsimas2004-ql}.} 
This choice allows us to bound the number of mistakes more tightly via a volume argument. 
The following lemma, derived from Grünbaum's theorem \citep{Grunbaum1960-au}, plays a key role in the analysis.\looseness=-1
\begin{lemma}\label{lem:volume-decrease}
  In \cref{alg:uncorrupted} with the above choice of $\hat w_t$, if $x_t \neq \hat x_t$, we have 
  \[
    \mathrm{Vol}(P_{t+1}) \le (1 - {1}/{\mathrm{e}}) \mathrm{Vol}(P_t),
  \]
  where $\mathrm{Vol}(\cdot)$ is the volume defined by the Lebesgue measure.
\end{lemma}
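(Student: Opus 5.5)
The plan is to exhibit a single halfspace through the center of gravity $\hat w_t$ of $P_t$ that contains $P_{t+1}$, and then apply Grünbaum's theorem. That theorem says any halfspace whose boundary passes through the centroid of a convex body $K$ contains at most a $1-1/\mathrm{e}$ fraction of the volume of $K$. First I note that $P_t$ is a convex polytope; it is an intersection of $[0,1]^d$ with halfspaces. It also has positive volume, because $w^*$ after rescaling into $[0,1]^d$ satisfies all constraints in $A_t$ strictly, since its components are distinct. I also note $P_{t+1}\subseteq P_t$, because $A_t\subseteq A_{t+1}$.

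The key step is to find a pair $(i,j)$ that is newly added to $A_{t+1}$ and that $\hat w_t$ orders the wrong way. Suppose $x_t\neq\hat x_t$. The proof of \cref{thm:basic} shows that the maximizer of $\inpr{\hat w_t,\cdot}$ over $X_t$ is unique, because the components of $\hat w_t$ are distinct. Hence $x_t\notin\argmax_{x\in X_t}\inpr{\hat w_t,x}$. By \cref{prop:m-convex-argmax}, there exist $i,j$ with $x_t-\e_i+\e_j\in X_t$ and $\hat w_t(i)<\hat w_t(j)$; the case $i=j$ is impossible here. By Step~6 of \cref{alg:uncorrupted}, $(i,j)\in A_{t+1}$. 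Therefore
\[
P_{t+1}\subseteq P_t\cap\Set*{w}{w(i)\ge w(j)}.
\]
Since $\hat w_t(i)<\hat w_t(j)$, the centroid lies strictly outside this halfspace.

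Next I translate the halfspace so that its boundary passes through the centroid. Define $H=\Set*{w}{w(i)-w(j)\ge \hat w_t(i)-\hat w_t(j)}$. Its threshold is negative, so $H\supseteq\Set*{w}{w(i)\ge w(j)}$, and $\hat w_t$ lies on the boundary of $H$. Grünbaum's theorem then gives $\mathrm{Vol}(P_t\cap H)\le(1-1/\mathrm{e})\mathrm{Vol}(P_t)$, and hence the same bound for $\mathrm{Vol}(P_{t+1})$.

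The main obstacle is a technical one. The algorithm uses a perturbed centroid, chosen so that the components are distinct, rather than the exact centroid. I would handle this in one of two ways. One option is to invoke the approximate Grünbaum theorem, as the footnote suggests. The other is cleaner: run the argument with the exact centroid $g$ and choose the perturbation small enough that every strict inequality $g(i)<g(j)$ is preserved. Then the mistaken pair found above satisfies $\hat w_t(i)<\hat w_t(j)$, and this gives $g(i)\le g(j)$ whenever $g$ has a tie. In either case, the halfspace $\Set*{w}{w(i)-w(j)\ge g(i)-g(j)}$ has its boundary through $g$ and contains $P_{t+1}$, so the exact Grünbaum bound applies.
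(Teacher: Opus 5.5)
Your proposal is correct and follows essentially the same route as the paper: use \cref{prop:m-convex-argmax} and uniqueness of the maximizer to find a pair $(i,j)\in A_{t+1}$ that $\hat w_t$ misorders, then apply Grünbaum's theorem to the halfspace $\{w : w(i)\ge w(j)\}$, which contains $P_{t+1}$ but not the centroid. Your two refinements make precise what the paper covers only with ``up to an arbitrarily small perturbation'' and the footnote on approximate Grünbaum: you explicitly translate the hyperplane so it passes through the centroid, and you choose the perturbation small enough to preserve the exact centroid's strict orderings, which forces $g(i)\le g(j)$ so that the exact theorem applies.
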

\begin{proof}
  We first show that there exists $(i, j) \in A_{t+1}$ such that $\hat w_t(i) < \hat w_t(j)$; otherwise, $\hat w_t(i) \ge \hat w_t(j)$ holds for all $(i, j) \in A_{t+1} \supseteq \Set*{(i, j)}{i \neq j,\, x_t - \e_i + \e_j \in X_t}$, and thus \cref{prop:m-convex-argmax} implies $x_t \in \argmax_{x \in X_t} \inpr{\hat w_t, x} = \set{\hat x_t}$,\footnote{
    The uniqueness of the maximizer follows from the distinctness of components of $\hat w_t$, as argued in the proof of \cref{thm:basic}.
  } contradicting $x_t \neq \hat x_t$. 
  Additionally, such $(i, j)$ must not belong to $A_t$ since we have $\hat w_t(i) < \hat w_t(j)$.
  
  Let $(i, j) \in A_{t+1} \setminus A_t$ be such a pair.
  Define the halfspace 
  \[
    H = \Set*{w \in \R^d}{w(i) \ge w(j)}.
  \]
  By the choice of $(i, j)$, we have $\hat w_t \in P_t \setminus H$. 
  By Grünbaum's theorem \citep{Grunbaum1960-au}, when a convex body is cut by a hyperplane, the fraction containing the center of gravity has volume at least $1/\mathrm{e}$ of the original convex body.
  As the center of gravity $\hat w_t$ (up to an arbitrarily small perturbation) lies in $P_t \setminus H$, we have $\mathrm{Vol}(P_t \setminus H) \ge \tfrac{1}{\mathrm{e}} \mathrm{Vol}(P_t)$. 

  By the definition of $P_{t+1}$, any $w \in P_{t+1}$ satisfies $w(i) \ge w(j)$ for the chosen $(i, j) \in A_{t+1} \setminus A_t$, and thus $P_{t+1}$ is contained in $P_t \cap H$. 
  Therefore, we have
  \begin{align}
    \mathrm{Vol}(P_t) 
    &= 
    \mathrm{Vol}(P_t \cap H) + \mathrm{Vol}(P_t \setminus H)  
    \\
    &\ge 
    \mathrm{Vol}(P_{t+1}) + \tfrac{1}{\mathrm{e}} \mathrm{Vol}(P_t).
  \end{align}
  Rearranging the terms yields the claim.  
\end{proof}

\begin{theorem}\label{thm:log}
  \Cref{alg:uncorrupted} with the choice of $\hat w_t$ as the center of gravity of $P_t$ (with tie-breaking) achieves 
  \[
    R_T = O(d \log d).
  \]
\end{theorem}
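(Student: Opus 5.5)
We combine \cref{lem:volume-decrease} with a lower bound on the volume of $P_t$ that holds for all $t$. By \cref{assumption:boundedness}, each round contributes $O(1)$ regret, and a round with $x_t = \hat x_t$ contributes zero. Hence $R_T = O(M)$, where $M$ is the number of rounds with $x_t \neq \hat x_t$. It therefore suffices to show $M = O(d \log d)$.

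First, we use the easy facts. We have $P_1 = [0,1]^d$, so $\mathrm{Vol}(P_1) = 1$. The sets $P_t$ are nonincreasing because $A_t \subseteq A_{t+1}$. By \cref{lem:volume-decrease}, every mistake multiplies the volume by at most $1 - 1/\mathrm{e}$. Consequently, after $M$ mistakes,
\[
\mathrm{Vol}(P_{T+1}) \le (1-1/\mathrm{e})^{M}.
\]

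The key step is a uniform lower bound $\mathrm{Vol}(P_t) \ge 1/d!$. As argued after \cref{alg:uncorrupted}, every pair $(i,j) \in A_t$ satisfies $w^*(i) > w^*(j)$, using uncorrupted feedback and \cref{prop:m-convex-argmax}. Let $\sigma$ be the permutation that sorts the components of $w^*$ in decreasing order; the components are distinct by \cref{assumption:boundedness}. Consider the order simplex
\[
S_\sigma = \Set*{w \in [0,1]^d}{w(\sigma(1)) \ge w(\sigma(2)) \ge \cdots \ge w(\sigma(d))}.
\]
Every constraint $w(i) \ge w(j)$ with $(i,j) \in A_t$ is implied by the chain defining $S_\sigma$, so $S_\sigma \subseteq P_t$ for all $t$. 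The cube $[0,1]^d$ is partitioned, up to measure zero, into the $d!$ congruent simplices $S_\pi$, one for each permutation $\pi$. Hence $\mathrm{Vol}(S_\sigma) = 1/d!$.

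Combining the two bounds gives
\[
1/d! \le (1-1/\mathrm{e})^{M}, \quad\text{so}\quad M \le \frac{\log(d!)}{\log\bigl(1/(1-1/\mathrm{e})\bigr)} = O(d\log d),
\]
by Stirling's formula or simply $d! \le d^d$.

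The main subtlety is the tie-breaking perturbation in the center-of-gravity choice. \Cref{lem:volume-decrease} already handles it, via the approximate Grünbaum theorem, at the cost of a constant factor that does not affect the $O(\cdot)$ bound. The containment $S_\sigma \subseteq P_t$ is the step that really uses the structure, and it is immediate once one notes that $A_t$ only records true orderings of $w^*$. If one also wants to drop the distinctness assumption, the same argument works by choosing $\sigma$ to be any linear extension of the true order consistent with $A_t$. Everything else follows from the preceding results.
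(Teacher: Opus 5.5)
Your proof is correct and is essentially the paper's argument: a $1-1/\mathrm{e}$ volume shrinkage per mistake from \cref{lem:volume-decrease}, the order simplex of $w^*$ contained in every $P_t$ giving a volume of at least $1/d!$, and $\log d! = O(d\log d)$; your explicit $O(1)$-per-round conversion and use of $P_{T+1}$ instead of $P_T$ are, if anything, slightly more careful than the paper. The one inaccuracy is in your closing aside, outside the theorem's scope: without distinctness, $A_t$ can contain both $(i,j)$ and $(j,i)$, so no linear extension consistent with $A_t$ need exist and $P_t$ can have zero volume, which is why the paper instead contracts strongly connected components and runs the volume argument in the quotient space (\cref{asec:tied-components}).
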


\begin{proof}
  
  By definition of $P_t$, $\mathrm{Vol}(P_t)$ is non-increasing in $t$, and \cref{lem:volume-decrease} ensures that it decreases by a factor of at least $1 - 1/\mathrm{e}$ whenever $x_t \neq \hat x_t$. Therefore, the number of rounds we can incur a non-zero regret is at most
  \[
    \log_{\tfrac{\mathrm{e}}{\mathrm{e} - 1}} \prn*{\frac{\mathrm{Vol}(P_1)}{\mathrm{Vol}(P_T)}}, 
  \]
  where $P_1 = [0,1]^d$ since $A_1 = \emptyset$. 
  Below, we upper bound ${\mathrm{Vol}(P_1)}/{\mathrm{Vol}(P_T)}$.

\begin{figure}[t]
  \centering
  \tdplotsetmaincoords{70}{120}
  \begin{tikzpicture}[
    tdplot_main_coords,
    scale=2.2,
    line join=round,
    every path/.style={line cap=round}
  ]

    \coordinate (O)   at (0,0,0);
    \coordinate (X)   at (1,0,0);
    \coordinate (Y)   at (0,1,0);
    \coordinate (Z)   at (0,0,1);
    \coordinate (Xaxis) at (1.15,0,0);
    \coordinate (Yaxis) at (0,1.15,0);
    \coordinate (Zaxis) at (0,0,1.15);
    \coordinate (XY)  at (1,1,0);
    \coordinate (XZ)  at (1,0,1);
    \coordinate (YZ)  at (0,1,1);
    \coordinate (XYZ) at (1,1,1);

    \newcommand{\tetra}[4]{%
      \fill[gray!70,opacity=0.4] (#1) -- (#2) -- (#3) -- cycle;
      \fill[gray!70,opacity=0.4] (#1) -- (#2) -- (#4) -- cycle;
      \fill[gray!70,opacity=0.4] (#1) -- (#3) -- (#4) -- cycle;
      \fill[gray!70,opacity=0.4] (#2) -- (#3) -- (#4) -- cycle;
    }

    \tetra{O}{X}{XY}{XYZ}
    \tetra{O}{X}{XZ}{XYZ}
    \tetra{O}{Y}{XY}{XYZ}
    \tetra{O}{Y}{YZ}{XYZ}
    \tetra{O}{Z}{YZ}{XYZ}
    \tetra{O}{Z}{XZ}{XYZ}

    \fill[red!75,opacity=0.35] (X) -- (XY) -- (XYZ) -- cycle;

    \draw[very thin] (O) -- (X) -- (XY) -- (Y) -- cycle;
    \draw[very thin] (O) -- (Z) -- (YZ) -- (Y);
    \draw[very thin] (X) -- (XZ) -- (XYZ) -- (XY);
    \path (X) -- node[left=-38pt,yshift=-13pt]{\textcolor{red!70}{$w(1) > w(2) > w(3)$}} (XY);
    \draw[very thin] (Z) -- (XZ);
    \draw[very thin] (YZ) -- (XYZ);

    \draw[->,line width=0.3pt] (O) -- (Xaxis) node[below=1pt,left=1pt]{$1$};
    \draw[->,line width=0.3pt] (O) -- (Yaxis) node[right=1pt]{$2$};
    \draw[->,line width=0.3pt] (O) -- (Zaxis) node[above=2pt]{$3$};

    \draw[dashed] (O) -- (XYZ);
  \end{tikzpicture}
  \caption{The decomposition of $[0,1]^3$ into six order simplices. The order simplex for $w(1) > w(2) > w(3)$ is highlighted in red.}
  \label{fig:freudenthal}
\end{figure}

Note that for all $(i, j) \in A_T$ we have $w^*(i) > w^*(j)$. 
Since the constraints defining $P_T$ depend only on these pairwise orderings, any vector $w \in [0,1]^d$ whose components have the same order as $w^*$ satisfies $w(i) \ge w(j)$ for all $(i, j) \in A_T$, and hence belongs to $P_T$. 
The set of such $w$ forms an order simplex, and the unit hypercube $[0, 1]^d$ can be decomposed into~$d!$ such order simplices of equal volume; see Figure~\ref{fig:freudenthal}.\footnote{This decomposition is classically known as the Freudenthal triangulation of the unit cube \citep{Freudenthal1942-mo} (see, e.g., \citealt{Edelsbrunner2012-vo} for a modern reference). We simply refer to each simplex in this triangulation as an order simplex.}
  Therefore, we have $\mathrm{Vol}(P_T) \ge \tfrac{1}{d!}\mathrm{Vol}\prn*{[0, 1]^d}$, and hence the number of rounds we incur a non-zero regret is at most
  \[
    \log_{\tfrac{\mathrm{e}}{\mathrm{e}-1}} d! = O(d \log d),
  \]
  completing the proof.
\end{proof}

\paragraph{Time complexity.}
The per-round time complexity of \cref{alg:uncorrupted} is dominated by three parts:
taking $\hat w_t$ in Step~3, computing $\hat x_t$ in Step~4, and updating $A_{t+1}$ in Step~6. 
The complexity of Steps~4 and~6 depends on the structure of M-convex sets $X_t$, but these can be done in polynomial time given access to a membership oracle for $X_t$. 
Regarding Step~3, the exact computation of the center of gravity is \#P-hard, even in our case where the polytope consists of order simplices \citep{Brightwell1991-zq}. 
Still, we can use a randomized algorithm \citep{Lovasz2006-ok} to obtain an approximate center of gravity in polynomial time, as is 
commonly done in the literature; see also \citet[Section 5.2.1]{Feldman2015-cp}. 
When it comes to the $O(d^2)$-regret algorithm in \cref{sec:warm-up}, Step~3 can be implemented more efficiently. 
Specifically, we can obtain $\hat w_t$ by performing a topological sort of the directed acyclic graph $([d], A_t)$ and assigning values to $\hat w_t$ according to the obtained order. 
This ensures $\hat w_t(i) > \hat w_t(j)$ for all $(i, j) \in A_t$, and the time complexity of this procedure is $O(d + |A_t|) = O(d^2)$. 
Further details on the approximate center-of-gravity implementation are given in \cref{asec:approximate-center-of-gravity}, per-round computational costs under standard oracle models are discussed in \cref{asec:oracle-and-matroid-implementations}, and a synthetic experiment is reported in \cref{asec:numerical-experiment}.

\section{$O((C + 1)d\log d)$-Regret Algorithm under Corruptions}\label{sec:corrupted}
We extend \cref{alg:uncorrupted} to handle corrupted instances, where the agent's actions are adversarially corrupted in up to $C$ rounds. 
We show that a simple modification yields a regret upper bound of $O((C + 1)d \log d)$, without using any prior knowledge of $C$. 
Our high-level idea is to restart \cref{alg:uncorrupted} whenever we detect a corruption that prevents us from guaranteeing that we incur no regret.
To this end, we utilize the acyclicity of the directed graph formed by $A_t$.

Recall that $A_t$ maintains a set of pairs of indices $(i, j)$ such that $w^*(i) > w^*(j)$ must hold for $x_1, \ldots, x_{t-1}$ to be optimal for $w^*$. 
Consider the directed graph $([d], A_t)$.
A key observation is that if $([d], A_{t+1})$ has a cycle, at least one of the observed actions is suboptimal, i.e., corrupted.\footnote{
  This implication uses the distinctness condition in \cref{assumption:boundedness}. 
  If tied components are allowed, a directed cycle may instead indicate that the corresponding coordinates of $w^*$ have equal values, and we cannot distinguish such cycles from those caused by corruptions.
  In the uncorrupted setting, this can be handled by contracting strongly connected components, as in \cref{asec:tied-components}.\looseness=-1
}
For later use, we state this in a general form for any interval of rounds.\looseness=-1

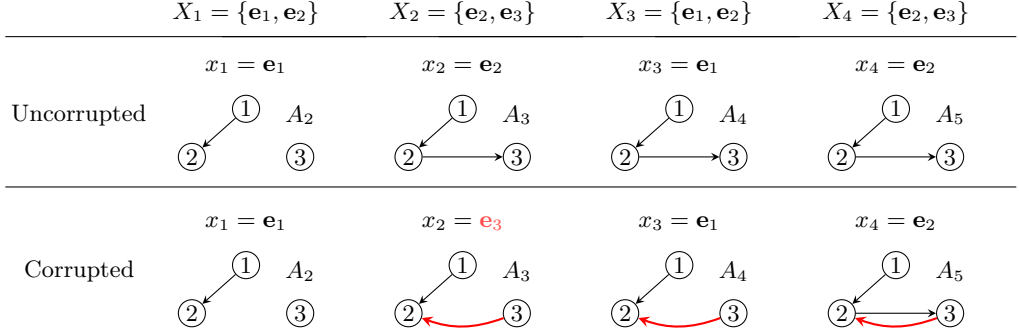
\begin{figure*}[t]
  \centering
  \tdplotsetmaincoords{70}{120} 

\begin{tikzpicture}[>=stealth, every node/.style={font=\small}]
  \node[anchor=east] at (-1.5,0) {Uncorrupted};

  \foreach \t/\Xt/\xt in {
    1/{\{\mathbf{e}_1,\mathbf{e}_2\}}/{\mathbf{e}_1},
    2/{\{\mathbf{e}_2,\mathbf{e}_3\}}/{\mathbf{e}_2},
    3/{\{\mathbf{e}_1,\mathbf{e}_2\}}/{\mathbf{e}_1},
    4/{\{\mathbf{e}_2,\mathbf{e}_3\}}/{\mathbf{e}_2}
  }{
    \begin{scope}[xshift=3.6cm*(\t-1)]
      \node at (0,1.7) {$X_{\t} = \Xt$};
      \node at (0,0.8) {$x_{\t} = \xt$};
      \pgfmathtruncatemacro{\nextt}{\t+1}
      \node[anchor=west] at (0.5,0.0) {$A_{\nextt}$};

  \draw[thin] (-4,1.3) -- (2,1.3);      

      \node[circle,draw,inner sep=1pt] (v1) at (0,0.1) {$1$};
      \node[circle,draw,inner sep=1pt] (v2) at (-0.9,-0.7) {$2$};
      \node[circle,draw,inner sep=1pt] (v3) at (0.9,-0.7) {$3$};

      \ifnum\t>0
        \draw[->] (v1) -- (v2);
      \fi
      \ifnum\t>1
        \draw[->] (v2) -- (v3);
      \fi
    \end{scope}
  }

  \draw[thin] (-4,-1.2) -- (12.8,-1.2);
  
  \node[anchor=east] at (-1.7,-2.6) {Corrupted};

  \foreach \t/\Xt/\xt in {
    1/{\{\mathbf{e}_1,\mathbf{e}_2\}}/{\mathbf{e}_1},
    2/{\{\mathbf{e}_2,\mathbf{e}_3\}}/{\textcolor{red!70}{\mathbf{e}_3}}, 
    3/{\{\mathbf{e}_1,\mathbf{e}_2\}}/{\mathbf{e}_1},
    4/{\{\mathbf{e}_2,\mathbf{e}_3\}}/{\mathbf{e}_2}
  }{
    \begin{scope}[xshift=3.6cm*(\t-1),yshift=-3.0cm]
      \node at (0,1.2) {$x_{\t} = \xt$};
      \pgfmathtruncatemacro{\nextt}{\t+1}
      \node[anchor=west] at (0.5,0.4) {$A_{\nextt}$};

      \node[circle,draw,inner sep=1pt] (w1) at (0,0.5) {$1$};
      \node[circle,draw,inner sep=1pt] (w2) at (-0.9,-0.3) {$2$};
      \node[circle,draw,inner sep=1pt] (w3) at (0.9,-0.3) {$3$};

      \ifnum\t>0
        \draw[->] (w1) -- (w2);
      \fi
      \ifnum\t>1
        \draw[->, color=red!100, thick, bend left=20] (w3) to (w2);
      \fi
      \ifnum\t>3
        \draw[->] (w2) -- (w3);
      \fi
    \end{scope}
  }  
\end{tikzpicture}
\vspace{5pt}
  \caption{
    Examples of directed graphs $([d], A_{t+1})$ in uncorrupted and corrupted sequences; note that the graph for $A_1 = \emptyset$ is omitted. 
    Let $d = 3$ and $w^* \in \R^3$ satisfy $w^*(1) > w^*(2) > w^*(3)$. Feasible sets $X_t$ are given in the top row. In the uncorrupted case, $A_{t+1}$ remains acyclic for all $t$. In the corrupted case, $x_2$ (shown in red) is corrupted, adding a wrong arc $3 \to 2$ to $A_3$. Still, $A_3$ remains acyclic and does not yet indicate the corruption. This does not affect the choice from $X_3 = \{\mathbf{e}_1, \mathbf{e}_2\}$ in the next round, and we correctly select $\hat x_3 = x_3 = \mathbf{e}_1$. Later, when $x_4$ is observed, the correct arc $2 \to 3$ is added, creating a cycle $2 \to 3 \to 2$ in $A_5$, revealing the corruption.}
  \label{fig:dag}
\end{figure*}

\begin{lemma}\label{lem:acyclic}
  Let $t', t \in \set*{0,\dots,T}$ with $t' < t$ and define the arc set for this interval as
  \[
    A_{t':t+1} = \bigcup_{s=t'+1}^{t} \Set*{(i, j)}{i \neq j,\, x_s - \e_i + \e_j \in X_s}.  
  \]
  If $([d], A_{t':t+1})$ has a directed cycle, then at least one of $x_{t'+1},\dots,x_t$ is suboptimal.
\end{lemma}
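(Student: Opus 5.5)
We argue by contraposition: assume that every one of $x_{t'+1},\dots,x_t$ is optimal for $w^*$, i.e., $x_s \in \argmax\Set*{\inpr{w^*, x}}{x \in X_s}$ for all $s \in \set{t'+1,\dots,t}$, and show that $([d], A_{t':t+1})$ is acyclic. This is the interval version of the acyclicity argument given for $([d], A_t)$ in \cref{sec:warm-up}, and the proof follows the same pattern.

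First, we show that every arc in the graph is strictly decreasing along $w^*$. Take any arc $(i,j) \in A_{t':t+1}$. By definition there is some $s$ with $t' < s \le t$ such that $i \neq j$ and $x_s - \e_i + \e_j \in X_s$. Since $X_s$ is M-convex and $x_s$ is optimal for $w^*$, the implication (1)$\Rightarrow$(2) of \cref{prop:m-convex-argmax} gives $w^*(i) \ge w^*(j)$. The components of $w^*$ are distinct by the second condition of \cref{assumption:boundedness}, and $i \neq j$, so the inequality is strict: $w^*(i) > w^*(j)$.

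Now suppose, for contradiction, that $([d], A_{t':t+1})$ contains a directed cycle $i_1 \to i_2 \to \cdots \to i_k \to i_1$ with $k \ge 2$. Applying the previous step to each arc and chaining the inequalities yields $w^*(i_1) > w^*(i_2) > \cdots > w^*(i_k) > w^*(i_1)$, which is impossible. Hence no directed cycle exists. Contrapositively, if a directed cycle exists, at least one of $x_{t'+1},\dots,x_t$ is suboptimal.

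No step is a real obstacle here. The only delicate point is the strictness of the inequalities: \cref{prop:m-convex-argmax} alone gives only $w^*(i) \ge w^*(j)$, and without distinct components a cycle could be consistent with ties. This is exactly where \cref{assumption:boundedness} is needed.
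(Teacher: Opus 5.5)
Your proposal is correct and follows essentially the same argument as the paper's proof. Both assume $x_{t'+1},\dots,x_t$ are all optimal, use optimality together with the distinctness of the components of $w^*$ to get a strict inequality $w^*(i) > w^*(j)$ along each arc of a directed cycle, and chain these inequalities to reach the contradiction $w^*(i_1) > w^*(i_1)$.
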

\begin{proof}
  Assume for contradiction that for $s = t'+1, \ldots, t$, $x_s \in \argmax_{x \in X_s} \inpr{w^*, x}$ holds.
  Take any directed cycle $i_1 \to i_2 \to \cdots \to i_{k+1} = i_1$ of $([d], A_{t':t+1})$. 
  By the definition of $A_{t':t+1}$, for each arc $(i_l, i_{l+1})$ in the cycle, there exists $s_l \in \set{t'+1, \dots, t}$ with $x_{s_l} - \e_{i_l} + \e_{i_{l+1}} \in X_{s_l}$. 
Since~$x_{s_l}$ is optimal for $w^*$ and the components of $w^*$ are distinct, $w^*(i_l) > w^*(i_{l+1})$ holds for $l = 1, \ldots, k$, hence  
  \[
    w^*(i_1) > w^*(i_2) > \cdots > w^*(i_k) > w^*(i_1),
  \]
  which is a contradiction.
\end{proof}
The lemma suggests that monitoring the acyclicity of $([d], A_{t':t+1})$ provides useful information for detecting corruption.
The inverse implication, however, is not necessarily true: 
the acyclicity of $([d], A_{t':t+1})$ does not imply the optimality of $x_{t'+1},\dots,x_{t}$. 
Still, if $([d], A_{t':t})$ is acyclic, we can take $\hat w_t$ required in Step~3 of \cref{alg:uncorrupted}, such that $\hat w_t(i) > \hat w_t(j)$ for all $(i, j) \in A_{t':t}$.
Then, similar to the proof of \cref{thm:basic}, if $A_{t':t+1} = A_{t':t}$, we can guarantee that playing $\hat x_t \in \argmax_{x \in X_t} \inpr{\hat w_t, x}$ yields no regret at round $t$, even in the presence of corruptions before round~$t$. 
The proof is essentially the same as that of \cref{thm:basic}, but we explicitly present it here to highlight that the claim still holds under potential corruptions.

\begin{lemma}\label{lem:no-regret}
  Let $t', t \in \set*{0,\dots,T}$ with $t' < t$. 
  Assume that the following conditions hold:
  \begin{enumerate}[itemsep=1pt,topsep=0pt]
    \item $([d], A_{t':t})$ is acyclic, and
    \item $A_{t':t} = A_{t':t+1}$.
  \end{enumerate} 
  Then, by taking any $\hat w_t$ with distinct components such that $\hat w_t(i) > \hat w_t(j)$ for all $(i, j) \in A_{t':t}$,
  we have $\hat x_t = x_t$.
\end{lemma}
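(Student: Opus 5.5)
Following the proof of \cref{thm:basic}, the argument has two parts. First, show that $x_t$ maximizes $\inpr{\hat w_t, \cdot}$ over $X_t$ via \cref{prop:m-convex-argmax}. Second, show that this maximizer is unique, so it must equal $\hat x_t$. Optimality of $x_t$ for $w^*$ is never used; only the purely combinatorial data $A_{t':t+1}$ and the M-convexity of $X_t$ enter. This is why the conclusion survives corruptions before round $t$, and even a corrupted $x_t$ itself.

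Existence of $\hat w_t$ is guaranteed by condition~1. Since $([d], A_{t':t})$ is acyclic, a topological sort gives an order of $[d]$ in which every arc goes forward. Assigning strictly decreasing values along this order yields a vector with distinct components and $\hat w_t(i) > \hat w_t(j)$ for all $(i,j) \in A_{t':t}$.

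For the first part, fix any such $\hat w_t$. By definition of $A_{t':t+1}$ (the union over $s = t'+1,\dots,t$), every pair $(i,j)$ with $i \neq j$ and $x_t - \e_i + \e_j \in X_t$ lies in $A_{t':t+1}$. By condition~2 it therefore lies in $A_{t':t}$, so $\hat w_t(i) > \hat w_t(j)$. For $i = j$ the inequality $\hat w_t(i) \ge \hat w_t(j)$ is trivial. Hence condition~2 of \cref{prop:m-convex-argmax} holds for $w = \hat w_t$ and $x = x_t \in X_t$, giving $x_t \in \argmax_{x \in X_t} \inpr{\hat w_t, x}$.

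For uniqueness, suppose $\hat x \neq \hat y$ are both maximizers. Pick $i$ with $\hat x(i) > \hat y(i)$; such an $i$ exists because both lie in a common M-convex set, so their coordinate sums agree. \Cref{def:m-convex-set} then gives $j$ with $\hat x(j) < \hat y(j)$ such that $\hat x - \e_i + \e_j \in X_t$ and $\hat y + \e_i - \e_j \in X_t$. These two points have objective values $\inpr{\hat w_t,\hat x} - \hat w_t(i) + \hat w_t(j)$ and $\inpr{\hat w_t,\hat y} + \hat w_t(i) - \hat w_t(j)$. Since $\hat w_t(i) \neq \hat w_t(j)$, one of them strictly exceeds the common optimal value, a contradiction. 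Thus the argmax is the singleton $\{\hat x_t\}$, and $\hat x_t = x_t$.

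No step is a real obstacle. The only point needing care is the existence of $i$ with $\hat x(i) > \hat y(i)$ when $\hat x \neq \hat y$, which uses the equal coordinate sums. This can also be avoided by swapping the roles of $\hat x$ and $\hat y$.
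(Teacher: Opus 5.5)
Your proof is correct and follows essentially the same argument as the paper: condition~2 puts every exchange pair $(i,j)$ of $x_t$ into $A_{t':t}$, so \cref{prop:m-convex-argmax} makes $x_t$ optimal for $\hat w_t$, and the distinct components of $\hat w_t$ together with the M-convex exchange property (as in the proof of \cref{thm:basic}) make the maximizer unique. Your side remarks are also accurate: condition~1 guarantees that such a $\hat w_t$ exists, and optimality of $x_t$ for $w^*$ is never used, so the conclusion holds even if $x_t$ itself is corrupted.
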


\begin{proof}
By the definition of $A_{t':t+1}$, $\hat w_t(i) > \hat w_t(j)$ holds for all $(i, j)$ with $i \neq j$ and $x_t - \e_i + \e_j \in X_t$. 
  Therefore, \cref{prop:m-convex-argmax} ensures $x_t \in \argmax\Set*{\inpr{\hat w_t, x}}{x \in X_t}$, and the maximizer set is a singleton, $\set{\hat x_t}$, as components of $\hat w_t$ are distinct, hence $\hat x_t = x_t$.
\end{proof}
\Cref{fig:dag} illustrates examples of directed graphs $([d], A_{t+1})$ for uncorrupted and corrupted sequences. In the corrupted sequence, the corruption at $t=2$ does not immediately create a cycle in $A_3$, but it is eventually revealed as a cycle at $t=4$, consistent with \cref{lem:acyclic}. At $t=3$, because $A_3 = A_4$, we can play $\hat x_3 = x_3$ and incur no regret despite the earlier corruption, as ensured by \cref{lem:no-regret}.

Thanks to the above two lemmas, we can effectively handle corruptions by monitoring the acyclicity of $([d], A_t)$:  
if $([d], A_{t':t+1})$ turns out to have a cycle, then there must be at least one corrupted round in $\set{t'+1, \dots, t}$ by \cref{lem:acyclic}; 
if $([d], A_{t':t})$ is acyclic and $A_{t':t} = A_{t':t+1}$ holds, we can play $\hat x_t$ with no regret at round $t$ by \cref{lem:no-regret}. 
Therefore, by running \cref{alg:uncorrupted} and restarting it whenever $([d], A_{t+1})$ turns out to have a cycle in Step~7, we can achieve a regret bound of $O((C + 1)d \log d)$ without knowing $C$ in advance. 
We present the resulting procedure in \cref{alg:corrupted}.

\begin{algorithm}[t]
  \caption{Algorithm for corrupted feedback}
  \label{alg:corrupted}
  \begin{algorithmic}[1]
    \State Let $A_1 = \emptyset$.
    \For{$t = 1, 2, \ldots, T$}
      \State 
       Take $\hat w_t$ with distinct components such that 
       \Statex
       \begin{center}
        $\hat w_t(i) > \hat w_t(j)$ \quad for all $(i, j) \in A_t$,
      \end{center} 
      \Statex 
      \hspace{\algorithmicindent}
        \parbox[t]{.9\linewidth}{%
      obtained by taking the center of gravity of $P_t$ and breaking tied components, if any, via an arbitrarily small perturbation within $P_t$.
      }
      \State Let $\hat x_t \in \argmax\Set*{\inpr{\hat w_t, x}}{x \in X_t}$.
      \State Observe the agent's action $x_t$.
      \State $A_{t+1}\!\gets\!A_t \cup \Set*{(i, j)\!}{\!i \neq j,\, x_t - \e_i + \e_j \in X_t}$.
      \State Restart: if $([d], A_{t+1})$ has a cycle, set $A_{t+1} = \emptyset$.
    \EndFor
  \end{algorithmic}
\end{algorithm}

\begin{theorem}\label{thm:corrupted}
  \Cref{alg:corrupted} achieves
  \[
    R_T = O((C + 1)d \log d).
  \]
\end{theorem}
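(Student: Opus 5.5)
The plan is to split the time horizon into \emph{epochs} delimited by the restarts in Step~7 of \cref{alg:corrupted}, and to bound separately the number of epochs and the number of regret-incurring rounds inside each epoch. Let $t_0 = 0 < t_1 < \cdots < t_K$ be the rounds at which a restart occurs, i.e., the rounds $t$ for which $([d], A_{t+1})$ computed in Step~6 contains a directed cycle. Set $t_{K+1} = T$. Within the $k$th epoch, rounds $t_{k-1}+1,\dots,t_k$, the set $A_t$ maintained by the algorithm coincides with $A_{t_{k-1}:t}$ from \cref{lem:acyclic}, except at the final round, where it is reset.

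First, bound $K$. At each restart round $t_k$, the graph $([d], A_{t_{k-1}:t_k+1})$ has a cycle. By \cref{lem:acyclic}, some round in $\{t_{k-1}+1,\dots,t_k\}$ is corrupted. These intervals are disjoint, so $K \le C$, and there are at most $C+1$ epochs.

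Second, bound the regret within a single epoch by $O(d\log d)$ plus $O(1)$ for the restart round. Fix an epoch and a round $t$ in it that is not the restart round. Then $([d], A_{t_{k-1}:t+1})$ is acyclic, so $P_{t+1}$ is defined as in \cref{sec:uncorrupted-main} and has positive volume. The key step is to reuse \cref{lem:volume-decrease}. Its proof only used \cref{prop:m-convex-argmax} applied to $\hat w_t$ (not to $w^*$), together with the Grünbaum cut. So whenever $x_t \neq \hat x_t$, we still have $\mathrm{Vol}(P_{t+1}) \le (1-1/\mathrm{e})\mathrm{Vol}(P_t)$, regardless of corruption. Equivalently, by \cref{lem:no-regret}, a mistake forces a new arc $(i,j)$ with $\hat w_t(i)<\hat w_t(j)$, which cuts off the centroid.

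The lower bound on volume is where the uncorrupted argument breaks, and this is the main obstacle. There we used that $P_T$ contains the order simplex of $w^*$, but corrupted arcs may be inconsistent with $w^*$. I would replace this with acyclicity. Since $([d], A_{t+1})$ is a DAG, it has a topological order $\sigma$. Every $w\in[0,1]^d$ whose coordinates are ordered according to $\sigma$ satisfies all constraints of $A_{t+1}$. Hence $P_{t+1}$ contains a full order simplex, and $\mathrm{Vol}(P_{t+1}) \ge 1/d!$. This holds at every non-restart round, so the number of mistake rounds in the epoch, excluding the restart round itself, is at most $\log_{\mathrm{e}/(\mathrm{e}-1)} d! + 1 = O(d\log d)$. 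The restart round contributes $O(1)$ regret by \cref{assumption:boundedness}.

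Summing over at most $C+1$ epochs, each contributing $O(d\log d)+O(1)$ regret, gives $R_T = O((C+1)d\log d)$. Note that no knowledge of $C$ is needed, because restarts are triggered only by detected cycles.
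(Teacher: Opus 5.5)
Your proof is correct and follows the paper's argument. The paper likewise uses epochs between restarts, bounds their number by $C+1$ via \cref{lem:acyclic} on disjoint intervals, and bounds mistakes within each epoch by $O(d\log d)$ using \cref{lem:volume-decrease} (which never uses optimality of $x_t$ for $w^*$) together with an order simplex inside $P_t$ guaranteed by acyclicity. Your topological-order justification of that simplex and your separate $O(1)$ charge for the restart round are more explicit than the paper's write-up, and your side remark attributing the centroid-cutting arc to \cref{lem:no-regret} actually comes from the proof of \cref{lem:volume-decrease}, which does not affect the argument.
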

\begin{proof}
  Let $t'$ denote a round such that $([d], A_{t'+1})$ turns out to have a cycle in Step~7 and the algorithm is restarted. 
  Let $t > t'$ be the next round such that $([d], A_{t+1})$ has a cycle in Step~7. 
  Within $s = t'+1,\dots,t$, $([d], A_{t':s})$ remains acyclic. 
  If $A_{t':s} = A_{t':s+1}$, \cref{lem:no-regret} ensures $\hat x_s = x_s$, yielding no regret. 
  Otherwise, $\hat x_s \neq x_s$ may happen; in this case, \cref{lem:volume-decrease}, which is proved without using optimality of $x_s$ for $w^*$, implies that the volume of $P_s$ (defined for $s \in \set{t'+1,\dots,t}$, starting from $P_{t'+1} = [0,1]^d$) decreases by a factor of $1-1/\mathrm{e}$. 
  As long as $([d], A_{t':s})$ is acyclic, $P_s$ contains at least one order simplex, hence $\mathrm{Vol}(P_{t'+1})/\mathrm{Vol}(P_t) \le d!$. 
  Therefore, by the same proof as that of \cref{thm:log}, the regret accumulated for $s = t'+1,\dots,t$ is $O(d\log d)$.\looseness=-1

  Because $([d], A_{t+1})$ has a cycle, \cref{lem:acyclic} implies that at least one of $x_{t'+1},\dots,x_t$ is corrupted, triggering a restart. 
  Therefore, the algorithm is restarted at most $C$ times over $T$ rounds, and the regret accumulated for each interval between two consecutive restarts is $O(d \log d)$, as discussed above. 
  Consequently, the regret over $T$ rounds is $O((C+1)d \log d)$, completing the proof.
\end{proof}

\paragraph{Time complexity.}
The additional time complexity incurred by the corruption-robust modification is due to cycle detection in Step~7. 
We can use the standard depth-first search or topological sort for this purpose, taking $O(d^2)$ time. 
Thus, the per-round time complexity of the corruption-robust algorithm increases by $O(d^2)$ compared to the uncorrupted case, which is negligible compared to the other parts as discussed at the end of \cref{sec:uncorrupted-main}.

\section{Lower Bound}\label{sec:lower}
To complement our upper bounds, we present a regret lower bound of $\Omega(d)$ that holds even for the M-convex setting. 
The proof is mostly based on that of \citet[Theorem~5.1]{Sakaue2025-vb}, but we take care of the M-convexity of action sets.\looseness=-1

\begin{theorem}\label{thm:lower}
There exists an instance of online inverse linear optimization such that $X_1,\dots,X_T$ are M-convex sets and any randomized algorithm incurs a regret of
  \[
    \mathbb{E}[R_T]=\Omega(d).
  \]  
\end{theorem}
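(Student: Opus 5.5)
We construct a hard instance in which every feasible set is a two-point set of the form $X_t = \set{\e_i, \e_j}$. Such a set is M-convex: the only nontrivial exchange from $x = \e_i$ to $y = \e_j$ uses index $i$ with $x(i) > y(i)$ and $j$ with $x(j) < y(j)$, and $\e_i - \e_i + \e_j = \e_j \in X_t$, $\e_j + \e_i - \e_j = \e_i \in X_t$. This is a special case of a uniform matroid of rank one restricted to two elements. Using these sets, we embed $\Omega(d)$ independent ``coin flips'' that the learner cannot predict better than chance, each costing constant expected regret.

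\textbf{Step 1: the hidden vector.} Assume $d$ is even and pair the coordinates as $(2k-1, 2k)$ for $k = 1, \dots, d/2$. Draw independent uniform signs $\sigma_k \in \set{\pm 1}$. Set $w^*(2k-1) = c_k + \sigma_k \epsilon$ and $w^*(2k) = c_k - \sigma_k \epsilon$, where the offsets $c_k$ are distinct and spaced far enough apart that all components of $w^*$ are distinct. The gap $2\epsilon$ is kept of constant order, so the first condition of \cref{assumption:boundedness} holds. For the second condition, rescale so all values lie in a bounded range; since only the within-pair comparison matters, we can take $\epsilon$ constant and $c_k$ separated by more than $2\epsilon$.

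\textbf{Step 2: the rounds.} For $t = 1, \dots, d/2$ (with any trivial one-point sets afterwards, which are M-convex and incur zero regret), let $X_t = \set{\e_{2t-1}, \e_{2t}}$. The agent plays $x_t = \e_{2t-1}$ if $\sigma_t = +1$ and $x_t = \e_{2t}$ otherwise.

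\textbf{Step 3: the regret bound.} The learner's choice $\hat x_t$ depends only on $(X_s, x_s)_{s<t}$ and its internal randomness. These reveal only $\sigma_1, \dots, \sigma_{t-1}$, which are independent of $\sigma_t$. Hence $\Pr[\hat x_t \neq x_t] = 1/2$, and each mistake costs $\inpr{w^*, x_t - \hat x_t} = 2\epsilon$. Summing over the rounds gives $\E[R_T] = (d/2)(1/2)(2\epsilon) = \Omega(d)$ under the random choice of $w^*$. By Yao's principle, or simply by averaging, some fixed $\sigma$ forces every randomized algorithm to have $\E[R_T] = \Omega(d)$.

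\textbf{Main obstacle.} The argument itself is routine. The care required is in verifying that all constructed sets are M-convex, including the filler rounds, and that the instance respects \cref{assumption:boundedness}. The distinct-components requirement is met by the offsets $c_k$, while bounded per-round regret is met because the gap is $2\epsilon = O(1)$. Some care is also needed in the remark that $T \ge d/2$; for $T < d/2$ the bound becomes $\Omega(\min\set{T, d})$, matching the statement in the regime of interest.
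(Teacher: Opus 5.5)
Your argument is correct, and it takes a genuinely different route from the paper's.

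\textbf{How the two routes differ.} The paper does not build a hard instance from scratch. It takes the $\Omega(m)$ instance of \citet{Sakaue2025-vb}, which uses axis-aligned line segments, and restricts the segments to their integer points. It then notes that the resulting sets are M$^\natural$-convex and lifts them to M-convex sets in $\Z^{m+1}$ by adding a dummy coordinate $x'(0)=-\sum_i x'(i)$ with $w'(0)=0$. Objective values, optimal actions and regret are unchanged, so the known bound transfers. You instead give a direct, self-contained coin-flip construction on disjoint pairs, using the two-point sets $\{\e_{2t-1},\e_{2t}\}$, i.e., rank-one matroids in $\{0,1\}^d$.

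\textbf{What each route buys.} The paper's route reuses an established lower bound, including its handling of randomized learners. It also shows that the generic hard instance for unrestricted action sets already fits the M-convex framework, via a lifting that works for any M$^\natural$-convex family. Your route proves a stronger statement: hardness holds already for matroids, indeed for two-action (pairwise-comparison) feedback, without an extra dimension. You also check both conditions of \cref{assumption:boundedness} explicitly (constant per-round gap, distinct components). Hence your hard instance lies exactly in the class where the $O(d\log d)$ upper bound is proved; the paper's reduction does not address the distinctness condition.

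\textbf{Quantifier slip in Step 3.} The last sentence swaps quantifiers. No single fixed $\sigma$ can force every algorithm to incur regret, because a learner that hardcodes $\hat w_t=w^*$ has zero regret on that instance. What your computation actually proves is this: for the random instance with uniform $\sigma$, every randomized algorithm satisfies $\E[R_T]\ge d\epsilon/2$, where the expectation is over both $\sigma$ and the learner's coins. Equivalently, for each algorithm some $\sigma$ is bad. This is the sense in which the theorem, and the cited bound the paper relies on, must be read.

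\textbf{Normalization of $w^*$.} The remark about rescaling $w^*$ into a bounded range clashes with spreading the $c_k$ more than $2\epsilon$ apart, since together they would force $\epsilon=O(1/d)$. No rescaling is needed: \cref{assumption:boundedness} constrains only the per-round gap $2\epsilon$, not the magnitude of $w^*$. If you do want bounded entries, take the $c_k$ distinct and close instead, e.g., all in $(0,\epsilon)$. This already makes every component distinct while keeping $\epsilon$ constant.
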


\begin{proof}
  \Citet[Theorem~5.1]{Sakaue2025-vb} obtain an $\Omega(m)$ lower bound in dimension $m$ by using instances such that each feasible set is an axis-aligned line segment:
  for some $i_t \in [m]$, the coordinate $x(i_t)$ ranges over an interval and all other coordinates are fixed to zero.
  Take $m=(4k)^2$ for some $k\in\Z_{>0}$, so that $\sqrt m/4=k$ is an integer.
  The same lower-bound argument applies after restricting each line segment to its integer points, because the forward optimization problem is linear and the maximum over a line segment is attained at an endpoint.
  Write the resulting integer feasible set as $Y_t\subseteq\Z^m$.

  The set $Y_t$ is not M-convex as a subset of $\Z^m$, but it is M${}^\natural$-convex, and the standard transformation from M${}^\natural$-convex sets to M-convex sets applies \citep[Sections~4.7 and~6.1]{Murota2003-bq}.
  Define an embedded set in $\Z^{m+1}$ by
  \[
      X_t'
      = \left\{x' \in \Z^{m+1} :
      \begin{aligned}
        \;& (x'(1),\dots,x'(m)) \in Y_t,\\
      & \textstyle x'(0) = -\sum_{i=1}^m x'(i)
    \end{aligned}
    \right\}.
  \]
  This $X_t'$ is M-convex: the additional coordinate records the negative total mass of the original coordinates, and hence all vectors in $X_t'$ have the same coordinate sum.

  It remains to check that this embedding preserves the inverse optimization instance.
  For an objective vector $w\in\R^m$ in the original lower-bound instance, define $w'\in\R^{m+1}$ by $w'(0)=0$ and $w'(i)=w(i)$ for $i\in[m]$.
  Then, for every $x'\in X_t'$ corresponding to $y=(x'(1),\dots,x'(m))\in Y_t$, we have $\inpr{w',x'} = \inpr{w,y}$.
  Thus, the embedded instance has exactly the same optimal actions, after ignoring the dummy coordinate, and the same regret values as the original lower-bound instance.
  Consequently, any algorithm for M-convex action sets in dimension $m+1$ would yield an algorithm for the original hard instance in dimension $m$ with the same regret; therefore, the $\Omega(m)$ lower bound transfers.
  This gives an~$\Omega(d)$ lower bound after renaming the dimension as $d$.\looseness=-1
\end{proof}

\section{Conclusion and Discussion}\label{sec:conclusion}
We have studied online inverse linear optimization with M-convex action sets and established finite regret bounds that scale polynomially in the dimension: $O(d \log d)$ for uncorrupted feedback, and $O((C+1)d \log d)$ under up to~$C$ adversarial corruptions without knowing $C$ in advance. 
Our analysis combines a structural characterization of optimal solutions on M-convex sets with the geometric volume argument, and it extends to the corruption-robust setting via monitoring of directed graphs constructed from feedback.
The results have partially closed the open question in the prior literature, and we have shown that broad classes of combinatorial action sets admit the improved regret bounds.

Several directions remain open for future research.
Closing the remaining $O(\log d)$ gap between the upper and lower bounds, and exploring the tight dependence on the number of corruptions are intriguing challenges. 
Establishing corruption-robust bounds that depend on the cumulative suboptimality, as considered in \citet{Sakaue2025-vb,Sakaue2026-ea}, is another interesting direction. 
Another potential direction is to connect our setting with reward estimation and inverse reinforcement learning for bandit models \citep{Guha2024-pw,Petriconi2024-kx}.
Our regret criterion evaluates the quality of the actions induced by the learner's estimates, rather than the estimation error of $w^*$ itself.
Moreover, our analysis relies on observing (approximate) optimizers of the forward problem, whereas in inverse bandit settings the observed actions may be generated by an online learning algorithm.
Extending our M-convexity-based approach to such settings would require additional ideas.
More broadly, as illustrated in \cref{fig:dag}, M-convex action sets subsume two-action settings, which can be viewed as a simple model of preference feedback---a burgeoning topic in human-in-the-loop machine learning \citep{Christiano2017-es,Ouyang2022-gz}. 
In such scenarios, the agent's actions and objectives can exhibit complex structure, typically represented in high-dimensional spaces. 
On the other hand, the lower bound of $\Omega(d)$ arises when the learner must infer each component of $w^*$ separately. 
In this light, understanding what structures of the ambient space of $w^*$ and action sets $X_t$ permit faster learning beyond the~$\Omega(d)$ barrier is an important direction for future work.

\section*{Acknowledgements}
Taihei Oki was supported by JST FOREST Grant Number JPMJFR232L, JSPS KAKENHI Grant Number JP22K17853, and JST BOOST Program Grant Number JPMJBY25A6.
Shinsaku Sakaue was supported by JST BOOST Program Grant Number JPMJBY24D1.
The authors thank the anonymous reviewers for their helpful reviews and comments.

\newrefcontext[sorting=nyt]
\printbibliography[heading=bibintoc]

\clearpage
\appendix

\section{Approximate Center of Gravity}\label[appendix]{asec:approximate-center-of-gravity}
We provide a polynomial-time implementation of the center-of-gravity computation required in \cref{sec:uncorrupted-main}.
Recall 
\[
  P_t = \Set*{w \in [0,1]^d}{w(i) \ge w(j) \text{ for all } (i,j) \in A_t}.
\]
We assume that $([d],A_t)$ is acyclic, as is the case in the uncorrupted setting and within each interval between two consecutive restarts in the corrupted setting, with $A_t$ interpreted as the interval-local arc set.

\paragraph{Membership oracle and an interior point.}
A membership query to $P_t$ can be answered by checking the box constraints and the order constraints induced by $A_t$, which takes $O(d^2)$ time.
To initialize the random walk, take a topological ordering~$\pi$ of $([d],A_t)$ and define $q(\pi(k)) = 1 - \frac{k}{d+1}$ for $k=1,\dots,d$. 
This definition ensures that $q$ belongs to $P_t$.
Moreover, for every order constraint $w(i) \ge w(j)$ defining $P_t$, the Euclidean distance from $q$ to the hyperplane $w(i)=w(j)$ is at least $1/((d+1)\sqrt{2})$, and the distance from $q$ to the boundary of $[0,1]^d$ is at least $1/(d+1)$.
Therefore, we have
\[
  R_0 \Bd \subseteq P_t-q \subseteq R_1 \Bd
  \quad\text{with}\quad
  R_0 = \frac{1}{(d+1)\sqrt{2}}
  \quad
  \text{and}
  \quad
  R_1 = \sqrt{d}.
\]
In particular, the ratio $R_1/R_0$ is $O(d^{3/2})$.

Let $z(P_t)$ denote the center of gravity of $P_t$.
By applying the hit-and-run algorithm of \citet{Lovasz2006-ok} through the guarantee of \citet[Theorem~5.7]{Feldman2015-cp}, for any $\tau>0$ and $\delta>0$, with probability at least $1-\delta$, we can compute a point $\hat w_t \in P_t$ that satisfies
\[
  \norm*{\hat w_t - z(P_t)}_{E_{P_t}} \le \tau,
\]
where $\norm{\cdot}_{E_{P_t}}$ denotes the norm induced by the inertial ellipsoid of $P_t$.
The $\tilde O$ notation below follows \citet[Theorem~5.7]{Feldman2015-cp}: after substituting $R_1/R_0=O(d^{3/2})$, we keep the factors corresponding to $\log(R_1/R_0)$ and $\log(1/\delta)$ explicit, while suppressing the additional polylogarithmic factors from the random-walk implementation.
The number of membership-oracle calls is 
$
  \tilde O\prn*{d^4 \log d \cdot \frac{\log(1/\delta)}{\tau^2}}
$.
Since each membership query takes $O(d^2)$ time, the time required for this approximate center-of-gravity computation per round is
\[
  \tilde O\prn*{d^6 \log d \cdot \frac{\log(1/\delta)}{\tau^2}}.
\]

\paragraph{Volume decrease.}
We next check that replacing the exact center of gravity by the approximate one preserves the regret analysis up to constant factors.
Suppose that $x_t \neq \hat x_t$ holds.
By the same argument as in the proof of \cref{lem:volume-decrease}, there exists $(i,j) \in A_{t+1}\setminus A_t$ such that $\hat w_t(i)<\hat w_t(j)$.
Define the halfspace $H$, as in the proof of \cref{lem:volume-decrease}, by
\[
  H = \Set*{w \in \R^d}{w(i) \ge w(j)}.
\]
Since $\hat w_t(i)<\hat w_t(j)$, the complementary halfspace contains $\hat w_t$.
The approximate Grünbaum theorem of \citet[Lemma~5.6]{Feldman2015-cp} implies 
\[
  \mathrm{Vol}\prn*{P_t \setminus H}
  \ge
  \prn*{\frac{1}{\mathrm{e}}-\tau}\mathrm{Vol}(P_t).
\]
Here, using the open complement instead of the closed complementary halfspace does not change the volume because the boundary hyperplane has zero Lebesgue measure.
Since $P_{t+1} \subseteq P_t \cap H$, we obtain
\[
    \mathrm{Vol}(P_{t+1})
    \le
    \mathrm{Vol}(P_t \cap H)
    =
    \mathrm{Vol}(P_t) - \mathrm{Vol}(P_t \setminus H)
    \le
    \prn*{1-\frac{1}{\mathrm{e}}+\tau}\mathrm{Vol}(P_t).
\]
Thus, for any constant $\tau<1/\mathrm{e}$, each mistake decreases the volume by a constant factor.

\paragraph{Regret bound.}
For example, set $\tau=\tfrac{1}{2\mathrm{e}}$.
Taking $\delta=1/T^2$ for each round and applying a union bound, all approximate center-of-gravity computations succeed with probability at least $1-1/T$.
Conditioned on this event, the proof of \cref{thm:log} gives an upper bound of 
\[
  \log_{\prn*{1-1/\mathrm{e}+\tau}^{-1}} d! = O(d\log d) 
\]
on the number of mistake rounds.
On the complement event, the regret is $O(T)$ by \cref{assumption:boundedness}, and hence its contribution to the expected regret is~$O(1)$.
Consequently, the randomized implementation based on approximate centers of gravity satisfies $\E[R_T] = O(d\log d)$.
The same argument applies to \cref{alg:corrupted} within each interval between two consecutive restarts, yielding $\E[R_T]=O((C+1)d\log d)$. 

\section{Detailed Computational Costs}\label[appendix]{asec:oracle-and-matroid-implementations}
We detail the per-round computational costs of the algorithms under standard oracle access.

\paragraph{General M-convex sets.}
Let $C_{\mathrm{mem}}$ denote the time required for one membership-oracle query to $X_t$, and let $C_{\mathrm{lin}}$ denote the time required for one linear optimization problem over the M-convex set~$X_t$.
The latter can be implemented in polynomial time given a membership oracle for $X_t$ by the algorithms of \citet[Theorems~3.1 and~3.2]{Shioura2007-jo}, since M-convex sets are special cases of jump systems.
For the $O(d^2)$-regret implementation of \cref{alg:uncorrupted}, Step~3 takes~$O(d^2)$ time by topological sorting.
Step~4 takes $C_{\mathrm{lin}}$ time, and Step~6 takes $O(d^2 C_{\mathrm{mem}})$ time by checking all~$O(d^2)$ pairs~$(i,j)$ and querying whether $x_t-\e_i+\e_j \in X_t$.
Thus, the per-round time of this implementation is $C_{\mathrm{lin}}+O(d^2 C_{\mathrm{mem}}+d^2)$.
For the $O(d\log d)$-regret implementation using the approximate center of gravity from \cref{asec:approximate-center-of-gravity}, Step~3 instead takes $\tilde O(d^6 \log d \cdot \log T)$ time per round, where we take $\tau$ as a constant and $\delta=1/T^2$.
Therefore, the per-round time is\looseness=-1
\[
  \tilde O(d^6 \log d \cdot \log T)
  +
  O\prn*{C_{\mathrm{lin}} + d^2 C_{\mathrm{mem}} + d^2}.
\]
For \cref{alg:corrupted}, the same per-round bounds apply, with the $O(d^2)$ cycle-detection time included in the $d^2$ term.

\paragraph{Matroid bases.}
Consider the important special case where $X_t$ is the set of characteristic vectors of bases of a matroid over~$[d]$, and suppose that an independence oracle is available.
Let $C_{\mathrm{ind}}$ denote the time required for one independence-oracle query.
This instantiates the preceding bounds with $C_{\mathrm{mem}}=C_{\mathrm{ind}}$ and $C_{\mathrm{lin}}=O(d\log d+d C_{\mathrm{ind}})$ (up to lower-order bookkeeping absorbed in the $d^2$ term). Indeed, linear optimization over matroid bases is solved by the standard greedy algorithm:
sort the elements by $\hat w_t$ and add each element whenever doing so preserves independence.
Moreover, in Step~6, feasibility of $x_t-\e_i+\e_j$ can be tested by one independence-oracle query for each pair $(i,j)$.
For the $O(d^2)$-regret implementation of \cref{alg:uncorrupted}, Step~3 is a topological sort of $([d],A_t)$ and hence takes~$O(d^2)$ time.
Therefore, in the matroid case its per-round time is dominated by Step~6, which is $O(d^2 C_{\mathrm{ind}})$.
For the $O(d\log d)$-regret implementation using the approximate center of gravity from \cref{asec:approximate-center-of-gravity}, the per-round time is
\[
  \tilde O(d^6 \log d \cdot \log T) + O(d^2 C_{\mathrm{ind}}),
\]
where we take $\tau$ as a constant and $\delta=1/T^2$.
For \cref{alg:corrupted}, the same per-round bounds apply, with an additional $O(d^2)$ time for cycle detection.

\section{Numerical Experiment}\label[appendix]{asec:numerical-experiment}
We provide a synthetic experiment to illustrate the effect of exploiting the M-convex structure.
The code is available at \url{https://github.com/ssakaue/online-inverse-m-convex-public}.
The feasible sets are time-varying uniform matroids.
There are $d=20$ items, and in each round $k=10$ items are unavailable.
Given the remaining $d-k=10$ available items, the feasible set $X_t$ consists of all subsets of size $m=5$.
We set $T=5000$ and report the average over five independent trials.
For \cref{alg:uncorrupted} with the $O(d\log d)$-regret implementation, we approximate the center of gravity by $N$ hit-and-run iterations.
We compare our two algorithms with ONS \citep{Sakaue2025-vb} and OGD \citep{Barmann2020-hh}.

\begin{table}[t]
  \centering
  \caption{Regret and running time on time-varying uniform matroids. Regret intervals are 95\% confidence intervals over five independent trials.}
  \label{tab:synthetic-experiment}
  \begin{tabular}{llr}
    \toprule
    Algorithm & Regret & Time per round \\
    \midrule
    \cref{alg:uncorrupted}, $O(d^2)$ version & $1.640 \pm 0.339$ & $11\,\mu\mathrm{s}$ \\
    \cref{alg:uncorrupted}, $O(d\log d)$ version $(N=10^2)$ & $1.677 \pm 0.260$ & $23\,\mu\mathrm{s}$ \\
    \cref{alg:uncorrupted}, $O(d\log d)$ version $(N=10^3)$ & $1.491 \pm 0.156$ & $120\,\mu\mathrm{s}$ \\
    \cref{alg:uncorrupted}, $O(d\log d)$ version $(N=10^4)$ & $1.201 \pm 0.382$ & $1.1\,\mathrm{ms}$ \\
    ONS \citep{Sakaue2025-vb} & $3.510 \pm 0.776$ & $70\,\mu\mathrm{s}$ \\
    OGD \citep{Barmann2020-hh} & $4.848 \pm 1.105$ & $4\,\mu\mathrm{s}$ \\
    \bottomrule
  \end{tabular}
\end{table}

The results in \cref{tab:synthetic-experiment} show that our algorithms specialized to M-convex sets achieve smaller regret than the two baselines in this experiment.
The performance of the $O(d\log d)$-regret implementation improves as the number of hit-and-run iterations increases, which is consistent with the role of the center-of-gravity approximation in the volume argument.

\section{On Handling Tied Components}\label[appendix]{asec:tied-components}
\Cref{assumption:boundedness} assumes that the components of $w^*$ are distinct.
This assumption simplifies the presentation, especially the use of acyclicity of $([d],A_t)$.
Here, we clarify how ties can be treated in the uncorrupted setting, and why the distinction matters in the corrupted setting.

Suppose first that the feedback is uncorrupted.
For any arc $(i,j)\in A_t$, optimality of the observed action and \cref{prop:m-convex-argmax} imply $w^*(i) \ge w^*(j)$.
Thus, if $i$ and $j$ belong to the same strongly connected component of $([d],A_t)$, then both $w^*(i)\ge w^*(j)$ and $w^*(j)\ge w^*(i)$ hold, and hence $w^*(i)=w^*(j)$.
Therefore, the vertices in each strongly connected component can be contracted without changing the objective values relevant to regret.
After this contraction, the condensation graph is acyclic.
One can then choose the estimate to be constant on each strongly connected component and strictly ordered along the condensation graph.
Any ambiguity in the selected maximizer is confined to coordinates in the same component, on which $w^*$ has equal values, and hence does not contribute to regret.
The volume argument in \cref{sec:uncorrupted-main} is applied in the quotient space of the contracted components, whose dimension is at most~$d$.
Thus, the same arguments as in \cref{sec:warm-up,sec:uncorrupted-main} apply to the contracted components.

In contrast, in the corrupted case, the distinctness assumption is used to interpret a directed cycle as a certificate of corruption.
Indeed, the proof of \cref{lem:acyclic} derives a contradiction from $w^*(i_1) > w^*(i_2) > \cdots > w^*(i_k) > w^*(i_1)$.
If ties are allowed, a cycle may instead indicate that the corresponding coordinates have equal objective values.
For this reason, the present corruption-detection mechanism relies on the distinctness condition, whereas the uncorrupted regret argument can be formulated after contracting strongly connected components.

\end{document}